\tikzset{>=Latex}
\def\eqref#1{equation~\ref{#1}}
\def\1{\bm{1}}
\DeclareMathAlphabet{\mathsfit}{\encodingdefault}{\sfdefault}{m}{sl}
\SetMathAlphabet{\mathsfit}{bold}{\encodingdefault}{\sfdefault}{bx}{n}
\newcommand{\R}{\mathbb{R}}
\DeclareMathOperator*{\argmax}{arg\,max}
\DeclareMathOperator*{\argmin}{arg\,min}
\newtheorem{theorem}{Theorem}[section]
\newtheorem{lemma}[theorem]{Lemma}
\newtheorem{proposition}[theorem]{Proposition}
\newtheorem{corollary}[theorem]{Corollary}
\newtheorem{definition}[theorem]{Definition}
\newtheorem{remark}[theorem]{Remark}
\newcommand{\gtext}[1]{\textcolor{gray}{#1}}
\definecolor{cvprblue}{rgb}{0.21,0.49,0.74}
\title{POUR: A Provably Optimal Method for Unlearning Representations \\ via Neural Collapse}
\author{
Anjie Le \quad Can Peng \quad Yuyuan Liu \quad J. Alison Noble \\ 
\small Institute of Biomedical Engineering, University of Oxford, UK  \\
}
\begin{document}
\maketitle

\begin{abstract}

In computer vision, machine unlearning aims to remove the influence of specific visual concepts or training images without retraining from scratch.
Studies show that existing approaches often modify the classifier while leaving internal representations intact, resulting in incomplete forgetting.
In this work, we extend the notion of unlearning to the {representation level}, deriving a three-term interplay between forgetting efficacy, retention fidelity, and class separation.
Building on Neural Collapse theory, we show that the orthogonal projection of a simplex Equiangular Tight Frame (ETF) remains an ETF in a lower dimensional space, yielding a provably optimal forgetting operator.
We further introduce the \textbf{Representation Unlearning Score} (RUS) to quantify representation-level forgetting and retention fidelity.
Building on this, we introduce \textbf{POUR} (\textbf{P}rovably \textbf{O}ptimal \textbf{U}nlearning of \textbf{R}epresentations), a geometric projection method with closed-form (\textbf{POUR-P}) and a feature-level unlearning variant under a distillation scheme (\textbf{POUR-D}).
{Experiments on CIFAR-10/100 and PathMNIST demonstrate that POUR achieves effective unlearning while preserving retained knowledge, outperforming state-of-the-art unlearning methods on both classification-level and representation-level metrics. Code is available at: \url{https://github.com/ale256/representation_unlearning}.}
\vspace{-0.6cm}
\end{abstract}

\section{Introduction}

The ability to selectively remove knowledge from trained models has become an increasingly important requirement for modern machine learning systems. 
Motivations include regulatory compliance with data protection laws (e.g., the ``right to be forgotten'') \cite{pipl2021,gdpr2016,ccpa2018}, mitigating reliance on spurious correlations, and ensuring the safe deployment of large pre-trained models in sensitive domains such as autonomous driving and medical imaging. In these applications, models often need to forget outdated, biased, or privacy-sensitive visual data while retaining general visual understanding for reliable downstream use. 
Figure \ref{fig:gradcam} illustrates this goal on the PathMNIST dataset, showing how our method erases the ``adipose'' class while preserving the remaining categories.

\begin{figure}
    \centering
    \includegraphics[width=0.8\linewidth, height=4cm, trim={1.6cm 4.5cm 2.6cm 1cm}]{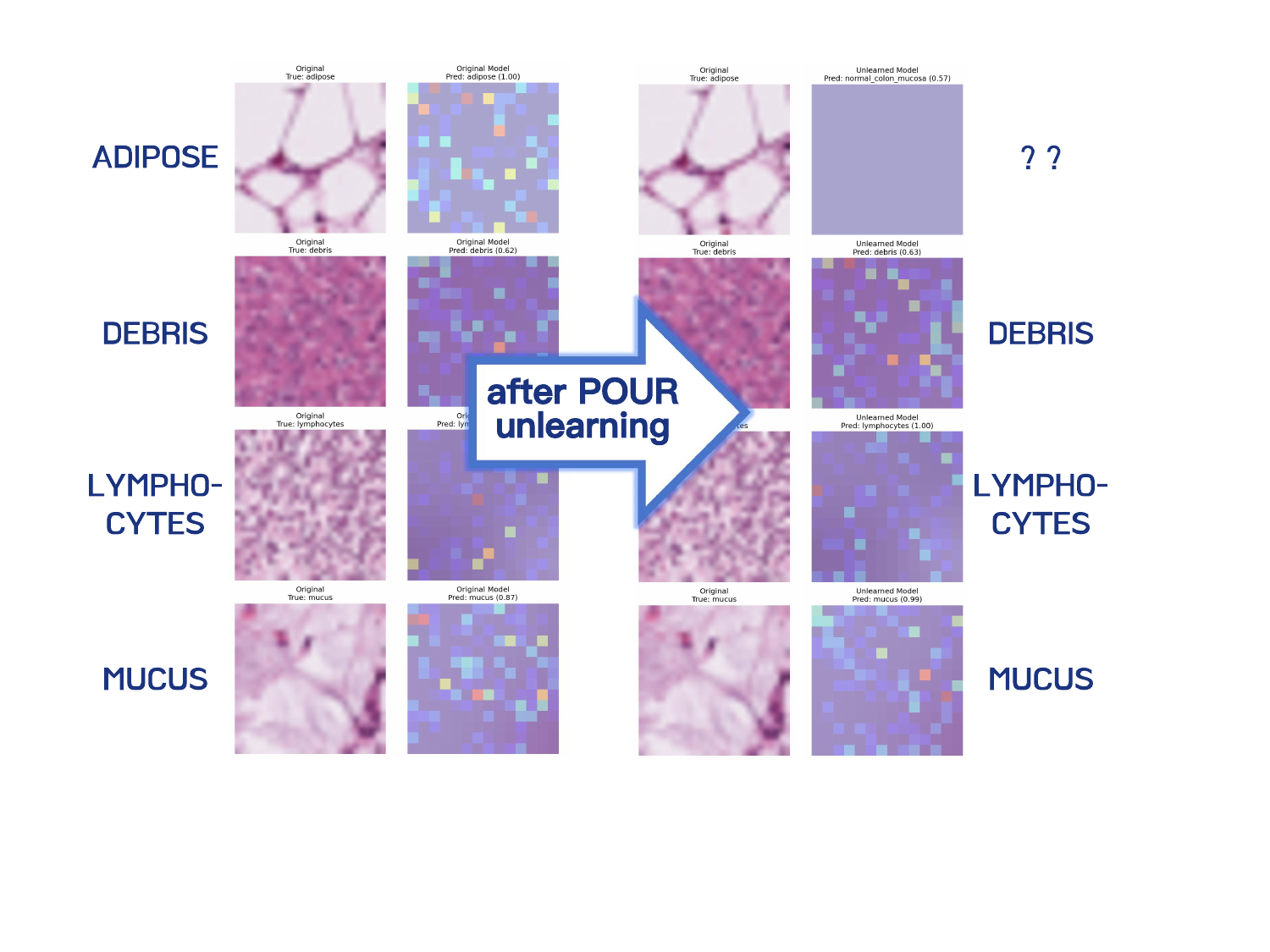}
    \caption{\textbf{Grad-CAM visualization on PathMNIST before and after unlearning.} 
Each row shows a tissue class. After applying POUR on the \textit{adipose} class, its Grad-CAM signal vanishes, 
while the retained classes (\textit{debris}, \textit{lymphocytes}, \textit{mucus}) preserve clear and distinct attention patterns.}
    \label{fig:gradcam}
    \vspace{-0.5cm}
\end{figure}

A growing literature on {machine unlearning} has explored how to make models forget a specific class, subset, or concept without retraining from scratch \citep{bourtoule2021machine,golatkar2020eternal,chundawat2023zeroshot,sepahvand2025selective}. 
Previous work on machine unlearning has primarily focused on aligning the prediction probabilities on the forget and retain sets. 
This line of research, often referred to as {weak unlearning} \citep{xu2023machineunlearningsurvey,golatkar2020forgetting}, aims to ensure that the distributions of the final logits produced by the original and unlearned models are indistinguishable. 
However, recent studies \citep{kim2025arewe} have questioned whether such methods {truly forget} the targeted information, as they often only perturb classifier logits while leaving the underlying feature representations largely unchanged. This shallow modification leaves residual information that can lead to privacy leakage \citep{zhou2025decoupled}. 
This issue is particularly critical for deep vision encoders whose internal representations can still leak forgotten visual concepts through linear probing or feature inversion \cite{kim2025arewe,golatkar2020eternal}.


In parallel, theoretical advances have revealed that deep visual classifiers exhibit highly structured geometric behavior at convergence. 
The theory of {Neural Collapse} (NC) shows that class features concentrate around equidistant centroids and classifier weights align to form a simplex Equiangular Tight Frame (ETF) \citep{papyan2020neuralcollapse}. 
This geometry provides a powerful lens for reasoning about class-level knowledge in image recognition: each class corresponds to a single ETF direction, and as we propose in this work, forgetting a class corresponds to removing its associated vector from the representation space. 
Previous work~\citep{kodge2024deep} has pursued heuristic realizations of ``projection as unlearning'' through Singular Value Decomposition (SVD)-based decomposition in activation space. However, the method lacks geometric consistency and theoretical guaranties.

In this work, we first extend the traditional notion of weak unlearning to the \emph{representation level}, and propose the \textbf{Representation Unlearning Score (RUS)} as a principled feature space metric for quantifying how well a model forgets. 
Building on this formulation, we observe that the restructuring of forget and retain representations occurs at different stages of unlearning, governed by class separation.
We also establish two new properties from the current NC framework: (i) a simplex ETF structure certifies Bayes optimality in balanced classification, and (ii) the orthogonal projection of a simplex ETF remains a simplex ETF. 
Therefore, class forgetting can be implemented as a projection operator that preserves the NC geometry along the direction of the forget classes, which leads to our proposed algorithm \textbf{POUR}
(\textbf{P}rovably \textbf{O}ptimal \textbf{U}nlearning of \textbf{R}epresentations).

POUR comes in two variants: a closed-form projection (\textbf{POUR-P}) that performs instantaneous forgetting, and a projection-guided distillation scheme (\textbf{POUR-D}) that propagates forgetting into the feature extractor using only the forget set through feature alignment.

In summary, our contributions are threefold:
\begin{itemize}
    \item We reformulate machine unlearning at the representation level and introduce RUS based on feature-space discrepancy.
    \item We establish a three-term interplay among forget, retain and class separation for unlearning problems, and derive two new theoretical properties of NC geometry, linking the simplex ETF structure to Bayes optimality and projection invariance.
    \item We propose POUR, a provably optimal projection-based unlearning algorithm with both closed-form and feature-adaptive variants, and formally prove its optimality.
\end{itemize}

{Experiments on CIFAR-10 and CIFAR-100 demonstrate that POUR effectively removes targeted visual concepts while preserving performance on retained classes.
On PathMNIST, POUR further exhibits consistent generalization under domain shift, achieving reliable performance across both internal and external test sets.}
\section{Reformulating Unlearning: Representation Removal with the Forget Set}
\label{sec:theory}

We define the class-centric machine unlearning problem with standard notation as follows.  
Let $\mathcal{D} = \{(\mathbf{x}_i, y_i)\}_{i=1}^n$ denote the entire dataset, where each sample $\mathbf{x}_i \in \mathcal{X} \subseteq \mathbb{R}^d$ is a $d$-dimensional vector, $y_i \in \mathcal{Y} = \{1, 2, \dots, C\}$ is the ground truth label among $C$ classes, and $n$ is the size of $\mathcal{D}$.  
A training algorithm $\mathcal{A}$ maps a dataset $\mathcal{D}$ to a model $M = (\theta, W)$, where $\theta: \mathcal{X} \!\to\! \mathcal{Z}$ is a feature extractor and $W$ is a classifier head.  
For each sample, $M(\mathbf{x}_i)$ approximates its label $y_i$.  

We partition $\mathcal{D}$ into a retain set $\mathcal{D}_r$ and a forget set $\mathcal{D}_f$, such that $\mathcal{D} = \mathcal{D}_f \cup \mathcal{D}_r$ and $\mathcal{D}_f \cap \mathcal{D}_r=\emptyset$.  
The goal of unlearning is to remove the influence of the forget set $\mathcal{D}_f \subset \mathcal{D}$ from the trained model while preserving performance on $\mathcal{D}_r$.  
Let $M_r = \mathcal{A}(\mathcal{D}_r)$ denote the reference model retrained from scratch using only $\mathcal{D}_r$.  
The unlearning process $\mathcal{U}(M, \mathcal{D})$ is then defined as a function that takes a trained model $M = \mathcal{A}(\mathcal{D})$ and produces a new model $M_f$ that behaves similarly to $M_r$.  

In the class-forgetting setting, if we wish to forget a class $u \in \mathcal{Y}$, then $\mathcal{D}_f = \mathcal{D}_u := \{\mathbf{x}_i : y_i = u\}$.  
The retain set $\mathcal{D}_r$ is often inaccessible due to privacy or practical constraints.  
Following \citep{zhou2025decoupled, cha2024learning}, we therefore consider the realistic setting where unlearning is performed using only the forget set $\mathcal{D}_f$, denoted by $\mathcal{U}(M, \mathcal{D}_f)$.

\subsection{Definition}  

Recent findings by \citet{kim2025arewe} demonstrate that focusing solely on final logits does not guarantee complete forgetting, as the forgotten information may still be recoverable through linear probing.
This observation highlights the need to investigate forgetting at the representation level, within the feature extractor itself, rather than only at the output layer.
Motivated by this, \textbf{we propose the concept of representation-level weak unlearning}, which, in contrast to the original definition of weak unlearning, explicitly accounts for the internal feature representations of models. 


\begin{definition}[Representation-Level Weak Unlearning]
\label{def:rep-unlearn}

An unlearning procedure $\mathcal{U}$ applied to $(M,\mathcal{D}_f)$ is said to satisfy 
representation-level weak unlearning if the feature distributions of the unlearned model 
are close to those of the reference model $M_r$, i.e.
\begin{equation}
    \mathcal{K}\!\Big(P_z^{\,\mathcal{U}(M,\mathcal{D}_f)},\; P_z^{\,M_r}\Big) < \epsilon, \label{eq:defn}
\end{equation}
for some distributional discrepancy measure $\mathcal{K}$ (e.g.\ MMD, Wasserstein-2, or Energy Distance) 
and tolerance $\epsilon>0$. 
Here $P_z^M$ denotes the distribution of features $z=\theta(x)$ 
induced by model $M$ on input $x$, where $x\sim \mathcal{D}$.  
\end{definition}

Intuitively, this condition requires that, after unlearning, the feature representations of $\mathcal{D}$ are statistically indistinguishable from those produced by the retrained reference model $M_r$.


\subsection{Practical Estimation of K}

Due to the stochastic nature of training, comparing the feature distributions of the unlearned model and the retrained model is not straightforward.
{We require a representation measure that is robust to randomness, including random initialization, rotations of the feature basis, and uniform rescaling of feature magnitudes. }
Therefore, we adopt {Centered Kernel Alignment (CKA)}~\citep{kornblith2019similarity} as a practical estimator of representation similarity, for its invariance to scaling and rotation; {additional justification is provided in Appendix~\ref{app:cka_robust}.}

\label{sec:cka}
Formally, given two representation matrices $X, Y \in \mathbb{R}^{n \times d}$ from two models evaluated on the same set of $n$ samples, their linear CKA similarity is defined as
\begin{equation}
\text{CKA}(X,Y)
=
\frac{\langle X X^\top,\, Y Y^\top \rangle_F}
{\|X X^\top\|_F \, \|Y Y^\top\|_F},
\end{equation}
where $\langle \cdot, \cdot \rangle_F$ denotes the Frobenius inner product, making it robust to common randomness in neural networks. 

In the unlearning setting, for $M_o$, $M_f$, and $M_r$, the original, unlearned, and retrained models, we define two families of CKA similarities:
\begin{equation}
\resizebox{\hsize}{!}{$
\begin{aligned}
& \text{CKA}^{(o)}_f := \text{CKA}(M_f, M_o; \mathcal{D}_f), 
\quad
\text{CKA}^{(o)}_r := \text{CKA}(M_f, M_o; \mathcal{D}_r), \\
& \text{CKA}^{(r)}_f := \text{CKA}(M_f, M_r; \mathcal{D}_f), 
\quad
\text{CKA}^{(r)}_r := \text{CKA}(M_f, M_r; \mathcal{D}_r).
\end{aligned}
$}
\end{equation}
The superscript $(o)$ indicates comparison with the original model, which is commonly available in practice, while $(r)$ denotes comparison with the retrained model, which approximates the theoretical ideal.

To jointly balance the forgetting and retention objectives, 
we define the {Representation Unlearning Score (RUS)} as
\begin{equation}
\text{RUS}^{(*)}
:=
\frac{2 \,\Phi_f^{(*)}\,\text{CKA}^{(*)}_r}
{\Phi_f^{(*)} + \text{CKA}^{(*)}_r},
\quad 
(*) \in \{(o), (r)\},
\end{equation}
where
\[
\Phi_f^{(o)} = 1 - \text{CKA}^{(o)}_f,
\qquad
\Phi_f^{(r)} = \text{CKA}^{(r)}_f.
\]

$\text{RUS}^{(r)}$ represents the evaluation using the retrained model as the reference, 
while $\text{RUS}^{(o)}$ provides a practical surrogate when the retrained model is inaccessible, for example, due to computational cost or data availability. 
{RUS$^{(*)}$ corresponds to the harmonic mean 
of retention alignment $\text{CKA}_r$ and the forgetting indicator $\Phi_f^{(*)}$,
rewarding methods that achieve both effective forgetting and faithful retention. The definition ensures that both variants of RUS take values in $[0,1]$ and increase with successful forgetting. }

\subsection{Theoretical Characterization}
\label{subsec:theo_charac}
We next show that the discrepancy between unlearned and reference feature distributions can be decomposed into {three interpretable components} that directly capture forgetting efficacy, retention fidelity, and class separation.

\begin{proposition}[Decomposition of $\mathcal{K}$ Bound]
\label{prop:decoupled-k}

Let $P_z^{(f)}$ and $P_z^{(r)}$ denote the feature distributions induced by the unlearned and retrained models, respectively. 
For a forget class $u \in \mathcal{Y}$ and an Integral Probability Metric (IPM) $\mathcal{K}$ defined on the feature space, 
by the law of total probability we can express
\begin{equation}
\resizebox{\hsize}{!}{$
P_z^{(f)} = \alpha\, P^{(f)}_u + (1-\alpha)\, P^{(f)}_{\neg u}, 
\quad
P_z^{(r)} = \beta\, P^{(r)}_u + (1-\beta)\, P^{(r)}_{\neg u},
$}
\end{equation}
where $\alpha := P_z^{(f)}(\hat{y}=u)$ and $\beta := P_z^{(r)}(\hat{y}=u)$ are the predicted probabilities of the unlearning class under each model, 
and $P^{(\cdot)}_{ u},P^{(\cdot)}_{\neg u}$ denote the unlearned and retained class feature distribution.
Then, the discrepancy between the unlearned and retrained feature distributions is bounded as
\begin{equation}
\resizebox{\hsize}{!}{$
\begin{aligned}
&\Big|
  \alpha\, \mathcal{K}\!\big(P^{(f)}_u, P^{(r)}_u\big)
  - (1-\alpha)\, \mathcal{K}\!\big(P^{(f)}_{\neg u}, P^{(r)}_{\neg u}\big)
\Big|
- |\,\alpha-\beta\,|\, \Delta_c
\\[3pt]
&\hspace{2.1cm}\le\;
\mathcal{K}\!\big(P_z^{(f)}, P_z^{(r)}\big)
\\[3pt]
&\hspace{2.1cm}\le\;
\underbrace{|\,\alpha-\beta\,|\, \Delta_c}_{\text{class separation}}
\;+\;
\underbrace{\alpha\, \mathcal{K}\!\big(P^{(f)}_u, P^{(r)}_u\big)}_{\text{forgotten-class discrepancy}}\\
&\hspace{2.5cm}\;+\;
\underbrace{(1-\alpha)\, \mathcal{K}\!\big(P^{(f)}_{\neg u}, P^{(r)}_{\neg u}\big)}_{\text{retained-class discrepancy}}\,,
\end{aligned}
$}
\end{equation}
where $
\Delta_c := \mathcal{K}\!\big(P^{(r)}_u, P^{(r)}_{\neg u}\big).
$
\end{proposition}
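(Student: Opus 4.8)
The plan is to treat the IPM $\mathcal{K}$ as a seminorm on finite signed measures over the feature space and then reduce the claim to two applications of the (reverse) triangle inequality. Recall that any integral probability metric can be written as $\mathcal{K}(P,Q)=\sup_{f\in\mathcal{F}}\big|\int f\,dP-\int f\,dQ\big|=\|P-Q\|_{\mathcal{F}}$ for a symmetric function class $\mathcal{F}$, and that the functional $\mu\mapsto\|\mu\|_{\mathcal{F}}:=\sup_{f\in\mathcal{F}}\big|\int f\,d\mu\big|$ is absolutely homogeneous and subadditive, hence a seminorm. In particular $\mathcal{K}(P_z^{(f)},P_z^{(r)})=\|P_z^{(f)}-P_z^{(r)}\|_{\mathcal{F}}$, and it is this quantity I will sandwich.

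The first concrete step is the algebraic rearrangement. Starting from the mixture decompositions in the statement and inserting the cross terms $\alpha P^{(r)}_u$ and $(1-\alpha)P^{(r)}_{\neg u}$, one obtains the signed-measure identity
\begin{equation}
\resizebox{\hsize}{!}{$
P_z^{(f)}-P_z^{(r)}
=\alpha\big(P^{(f)}_u-P^{(r)}_u\big)
+(1-\alpha)\big(P^{(f)}_{\neg u}-P^{(r)}_{\neg u}\big)
+(\alpha-\beta)\big(P^{(r)}_u-P^{(r)}_{\neg u}\big),
$}
\end{equation}
where the coefficients on $P^{(r)}_u$ and $P^{(r)}_{\neg u}$ collapse to $\alpha-\beta$ and $-(\alpha-\beta)$ respectively. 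Denote the three summands by $A,B,C$. Absolute homogeneity (together with $\alpha\in[0,1]$) gives $\|A\|_{\mathcal{F}}=\alpha\,\mathcal{K}(P^{(f)}_u,P^{(r)}_u)$, $\|B\|_{\mathcal{F}}=(1-\alpha)\,\mathcal{K}(P^{(f)}_{\neg u},P^{(r)}_{\neg u})$, and $\|C\|_{\mathcal{F}}=|\alpha-\beta|\,\Delta_c$ with $\Delta_c=\mathcal{K}(P^{(r)}_u,P^{(r)}_{\neg u})$.

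The upper bound is then immediate from subadditivity, $\|A+B+C\|_{\mathcal{F}}\le\|A\|_{\mathcal{F}}+\|B\|_{\mathcal{F}}+\|C\|_{\mathcal{F}}$, which reproduces the three labelled terms (class separation, forgotten-class discrepancy, retained-class discrepancy) exactly. For the lower bound I would apply the reverse triangle inequality twice: first $\|A+B+C\|_{\mathcal{F}}\ge\|A+B\|_{\mathcal{F}}-\|C\|_{\mathcal{F}}$, and then $\|A+B\|_{\mathcal{F}}\ge\big|\,\|A\|_{\mathcal{F}}-\|B\|_{\mathcal{F}}\,\big|$, yielding $\mathcal{K}(P_z^{(f)},P_z^{(r)})\ge\big|\alpha\,\mathcal{K}(P^{(f)}_u,P^{(r)}_u)-(1-\alpha)\,\mathcal{K}(P^{(f)}_{\neg u},P^{(r)}_{\neg u})\big|-|\alpha-\beta|\,\Delta_c$, which is the stated lower bound.

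There is nothing deep here; the two points needing care are (i) confirming that the chosen discrepancy genuinely behaves as a seminorm on signed measures — valid for MMD, Wasserstein-$1$ and Energy Distance, which is precisely why the hypothesis restricts $\mathcal{K}$ to IPMs rather than allowing general $f$-divergences — and (ii) tracking signs in the two-step reverse-triangle argument so that $|\alpha-\beta|\,\Delta_c$ enters with a minus on the lower side and a plus on the upper side. I expect (ii) to be the only real ``obstacle,'' and the two-stage splitting above resolves it cleanly.
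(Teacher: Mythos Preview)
Your proof is correct and follows essentially the same route as the paper: an add-and-subtract rearrangement of the mixture difference into three signed-measure summands, followed by the triangle inequality for the upper bound and two applications of the reverse triangle inequality for the lower bound. The only cosmetic difference is that you work directly at the seminorm level $\|\cdot\|_{\mathcal F}$ on signed measures while the paper carries the argument test-function by test-function before taking the supremum; your decomposition (pivoting on $\alpha$ and the retrained-model cross term $P^{(r)}_u-P^{(r)}_{\neg u}$) is in fact the one that matches the main-text statement exactly, whereas the appendix proof uses the symmetric variant pivoting on $\beta$.
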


A complete statement and proof is given in Appendix~\ref{lem:decoupled-k}.  
When unlearning is performed on the forget set, we have $\beta = 0$, and the forgetting coefficient $\alpha$ decreases gradually from approximately $1$ to $0$ as unlearning proceeds.
In this regime, the effective supervision target becomes $P_{\neg u}^{(r)}$, while both $P_{u}^{(f)}$ and $P_{\neg u}^{(f)}$ are progressively aligned toward the retained-class manifold of the retrained model at different stages of unlearning.
The class-separation term simplifies to $\alpha \Delta_c$, indicating that stronger geometric separation in the retrained feature space enables more effective guidance for forgetting at early stages.
Consequently, unlearning can often be achieved using the forget set alone when class separation is sufficient; however, when separation in the retrained model is weak, the forget-set-only strategy becomes less effective.
This phenomenon is also confirmed empirically, as discussed in Section~\ref{subsec:cifar100}.
\section{Representation Space: Neural Collapse and Simplex ETF Geometry}
\label{subsec:nc-background}

The trade-off derived above reveals that unlearning dynamics are fundamentally governed by the geometry of class separation in representation space.
To analyze this geometry, we draw inspiration from the theory of {Neural Collapse} (NC), which shows that deep classifiers trained with cross-entropy loss organize features into a Simplex ETF during the {Terminal Phase of Training (TPT)}~\citep{papyan2020neuralcollapse} under certain assumptions described in Appendix \ref{app:nc_assumptions}.

Formally, for each class $i$, the learned feature representation takes the form
\[
z_\theta(x) = \alpha(x)\,v_i,
\]
where $z_\theta(x)$ denotes the feature extractor $\theta$ applied to input $x$, $\alpha(x) > 0$ is a class-dependent scaling factor, and $v_i \in \mathbb{R}^d$ is a unit direction representing the class mean.  
The set of class directions $\{v_i\}_{i=1}^C$ lies in a $(C{-}1)$-dimensional subspace and forms a simplex ETF, i.e.,
\[
\|v_i\| = 1, \quad v_i^\top v_j = -\tfrac{1}{C-1} \text{ for } i \neq j, \quad \text{and} \quad \sum_{i=1}^C v_i = 0,
\]
which implies that class means are maximally separated and symmetrically arranged in feature space. Furthermore, the classifier’s weight vectors align with these class directions.

{A detailed formulation of the underlying assumptions and the full NC statements are included in Appendix~\ref{app:nc_assumptions}.} An empirical investigation of the NC phenomenon is given in Section \ref{subsec:nc_pheno}. 
This provides a natural foundation for understanding and manipulating class forgetting at the representation level.

\label{subsec:proj-etf}

In prior work, this ETF geometry has primarily been regarded as a {descriptive limit} of training dynamics. 
In this work, \textbf{we establish two new properties of the NC phenomenon}. 
First, we show that the simplex ETF geometry is not only a consequence of optimization, but also a sufficient condition for Bayes optimality under natural statistical assumptions. 
In this sense, the ETF structure serves as an optimality certificate. 
Second, we demonstrate that the ETF geometry is preserved under orthogonal projection when one vertex is removed, thereby providing the geometric foundation for our proposed unlearning method.

\subsection{ETF as an Optimal Condition}
\label{subsec:class_assump}
In addition to the NC assumptions, we further assume 
that the class-conditional feature distributions are isotropic Gaussians,
\[
x \mid y=i \;\sim\; \mathcal{N}(v_i, \sigma^2 I_d),
\]
where $\|v_i\|=1$ for all $i$ and $\sigma^2>0$ is fixed.
Empirically, this corresponds to features within each class clustering around a well-defined mean with approximately uniform variance in all directions, consistent with an isotropic Gaussian structure. In practice, datasets with sufficiently large sample sizes naturally satisfy this assumption by the Law of Large Numbers.
Under these assumptions, we have the following proposition:

\begin{proposition}[ETF $\Rightarrow$ Bayes optimality]
\label{prop:etf-opt}
\hspace{1mm}
\begin{enumerate}[label=(\roman*), leftmargin=*, labelsep=0.5em, align=parleft]
\item the simplex ETF uniquely maximizes the minimum pairwise angle among class means,
\item it maximizes the multiclass angular margin of the nearest-class-mean classifier, and
\item in the limit $\kappa \to \infty$ or as the within-class variance $\sigma^2 \to 0$, the induced decision rule coincides with the Bayes-optimal classifier.
\end{enumerate}

\end{proposition}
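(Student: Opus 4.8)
\emph{Setup and plan.} The plan is to prove the three items in order, since (ii) reduces to the optimization solved in (i), and (iii) is a short Bayes-rule computation that invokes only the unit-norm property of the ETF vectors together with the Neural Collapse weight--mean alignment. For (i), since $\arccos$ is strictly decreasing, ``maximize $\min_{i\neq j}\angle(v_i,v_j)$ over unit vectors $v_1,\dots,v_C\in\mathbb{R}^d$'' is equivalent to ``minimize $\max_{i\neq j} v_i^\top v_j$''. I would then use the averaging bound $0 \le \big\|\sum_i v_i\big\|^2 = C + \sum_{i\neq j} v_i^\top v_j$, which gives $\max_{i\neq j} v_i^\top v_j \ge \frac{1}{C(C-1)}\sum_{i\neq j} v_i^\top v_j \ge -\frac{1}{C-1}$. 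Equality forces both $\sum_i v_i = \mathbf{0}$ and $v_i^\top v_j = -\frac{1}{C-1}$ for every $i\neq j$, i.e. the configuration is a simplex ETF, which exists precisely when $d \ge C-1$. For uniqueness, these equality conditions pin the Gram matrix down to $G=\frac{C}{C-1}\big(I-\frac{1}{C}\mathbf{1}\mathbf{1}^\top\big)$; since any two vector families with the same Gram matrix differ by a global orthogonal transformation, the maximizer is unique up to rotation (and spans a fixed $(C{-}1)$-dimensional subspace).

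\emph{Part (ii).} The nearest-class-mean rule sends $x\mapsto\argmax_i x^\top v_i$, which equals $\argmin_i \|x-v_i\|$ because all $\|v_i\|$ coincide; its class-$i$/class-$j$ boundary is the hyperplane $H_{ij}=\{x:(v_i-v_j)^\top x=0\}$ through the origin. The Euclidean distance from the unit vector $v_i$ to $H_{ij}$ is $\frac{(v_i-v_j)^\top v_i}{\|v_i-v_j\|}=\sqrt{\tfrac{1-v_i^\top v_j}{2}}=\sin(\theta_{ij}/2)$ with $\theta_{ij}=\arccos(v_i^\top v_j)$; equivalently, the great circle from $v_i$ to $v_j$ meets $H_{ij}$ at the bisector direction $(v_i+v_j)/\|v_i+v_j\|$, at angular distance $\theta_{ij}/2$. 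Hence the worst-case angular margin of the classifier is $\tfrac{1}{2}\min_{i\neq j}\theta_{ij}$, and maximizing it over all unit-norm mean configurations is exactly the problem of Part (i); the simplex ETF is thus its unique maximizer.

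\emph{Part (iii).} For the balanced isotropic model $x\mid y=i\sim\mathcal{N}(v_i,\sigma^2 I_d)$ with uniform prior, the Bayes (MAP) rule is $\argmax_i p(x\mid y=i)=\argmin_i\|x-v_i\|^2=\argmax_i x^\top v_i$, again using $\|v_i\|\equiv 1$; so the Bayes classifier is exactly the nearest-class-mean rule of Part (ii), and Neural Collapse guarantees the learned head $W$ has rows aligned with the $v_i$, hence realizes it in the argmax limit. To connect this to the soft classifier $\hat y(x)=\argmax_i\,\softmax\!\big(\kappa\,v_i^\top z_\theta(x)\big)_i$, note $\softmax$ is monotone, so $\hat y(x)=\argmax_i v_i^\top z_\theta(x)$ for every finite $\kappa$ off the measure-zero tie set; the role of $\kappa\to\infty$ (respectively $\sigma^2\to 0$) is to make the induced posterior weights converge, by dominated convergence, to the hard indicator $\mathbf{1}[i=\argmax_j v_j^\top x]$, i.e. the Bayes posterior, so the whole predictive distribution---not merely its $\argmax$---matches the Bayes-optimal one.

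\emph{Main obstacle.} The delicate part is making (iii) precise: fixing what the limiting object is ($\kappa$ as the inverse softmax temperature, with the comparison to Bayes taken at the level of posteriors rather than labels), isolating the measure-zero set of ties where the $\argmax$ is ill-defined, and being explicit that the ETF enters (iii) only through unit norms plus the NC weight--mean alignment, while all of the angular-optimality content lives in (i)--(ii). Given the averaging bound and the bisector computation, Parts (i) and (ii) are otherwise routine.
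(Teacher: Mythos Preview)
Your proof is correct and follows essentially the same route as the paper: (i) via the Welch/simplex averaging bound $\|\sum_i v_i\|^2\ge 0$ with the equality characterization fixing the Gram matrix, (ii) by reducing the worst-case angular margin of the NCM classifier to the minimum pairwise angle through the bisecting hyperplane $H_{ij}$, and (iii) by expanding the Gaussian Bayes rule under equal priors and using that the $\|v_i\|$ are all equal together with the NC weight--mean alignment. Your treatment is slightly more explicit than the paper's---you derive the bound directly rather than citing Welch, compute the margin as $\sin(\theta_{ij}/2)$ rather than merely invoking monotonicity, and spell out the $\kappa\to\infty$ posterior convergence---but the structure and key ideas coincide.
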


Detailed proof of is provided in Appendix~\ref{app:bayes_ncm}.

\subsection{ETF Stability under Projection.} 
The second property concerns the robustness of ETF geometry under dimensionality reduction. 
Geometrically, removing one vertex of a regular simplex and projecting the remaining vertices 
onto the complementary subspace yields a smaller regular simplex. 
This effect, visualized in Figure \ref{fig:proj_etf} for a $C=4$ case, is captured in the following proposition.

\begin{proposition}[Projection of a simplex ETF remains a simplex ETF]
\label{prop:etf-projection}
With the assumption and notations above, fix $u\in\{1,\dots,C\}$ and let $P=I-v_u v_u^\top$ be the orthogonal projector onto $v_u^\perp$. 
For $i\in\mathcal{Y}_{\neg u}$, define $g_i = \tfrac{P v_i}{\|P v_i\|}$. 
Then $\{g_i\}_{i\in\mathcal{Y}_{\neg u}} \subset v_u^\perp \cong \mathbb{R}^{C-2}$ is a simplex ETF of size $C-1$, i.e.
$
g_i^\top g_j = -\tfrac{1}{C-2} (\forall i\neq j), 
\sum_{i\in\mathcal{Y}_{\neg u}} g_i = 0.
$
\end{proposition}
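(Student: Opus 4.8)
The plan is to verify the three defining properties of a simplex ETF directly by computing the projected Gram matrix. First I would make the projector explicit: since $v_i^\top v_u = -\tfrac{1}{C-1}$ for every $i\ne u$, the orthogonal projector $P=I-v_uv_u^\top$ acts by
\[
P v_i = v_i - (v_i^\top v_u)\,v_u = v_i + \tfrac{1}{C-1}\,v_u .
\]
Squaring gives $\|P v_i\|^2 = \|v_i\|^2 - (v_i^\top v_u)^2 = 1 - \tfrac{1}{(C-1)^2} = \tfrac{C(C-2)}{(C-1)^2}$, which is independent of $i$. Hence the renormalization $g_i = P v_i/\|P v_i\|$ divides every $P v_i$ by the \emph{same} constant $\rho := \sqrt{C(C-2)}/(C-1)$ (here $C\ge 3$, so $\rho>0$), so each $g_i$ is a unit vector in $v_u^\perp$ and all angular relations among the $P v_i$ pass verbatim to the $g_i$.

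Next I would compute the off-diagonal entries. For $i\ne j$ with $i,j\ne u$, using that $P$ is symmetric and idempotent,
\[
\langle P v_i, P v_j\rangle = v_i^\top P v_j = v_i^\top v_j - (v_i^\top v_u)(v_u^\top v_j) = -\tfrac{1}{C-1} - \tfrac{1}{(C-1)^2} = -\tfrac{C}{(C-1)^2}.
\]
Dividing by $\rho^2 = C(C-2)/(C-1)^2$ yields $g_i^\top g_j = -\tfrac{1}{C-2}$ for all $i\ne j$, exactly the Gram entry of a simplex ETF of size $C-1$. For the centering property, linearity of $P$ together with $\sum_{i=1}^C v_i = 0$ gives $\sum_{i\ne u} P v_i = P\big(\sum_{i=1}^C v_i - v_u\big) = P(-v_u) = -(v_u - v_u) = 0$, and dividing by the common constant $\rho$ gives $\sum_{i\ne u} g_i = 0$.

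Finally I would pin down the ambient dimension so that the frame is certified to live in $\mathbb{R}^{C-2}$, not merely in a larger space. Because $\sum_{i=1}^C v_i = 0$, we have $v_u = -\sum_{i\ne u} v_i$, so $\mathrm{span}\{v_i\}_{i\ne u} = \mathrm{span}\{v_i\}_{i=1}^C$, which is $(C-1)$-dimensional by the ETF hypothesis. The restriction of $P$ to this subspace annihilates exactly the line $\mathbb{R}v_u$ and is injective on a complementary $(C-2)$-dimensional subspace, so its image — which contains every $P v_i$ and hence every $g_i$ — has dimension $C-2$. Identifying this image with $\mathbb{R}^{C-2}\subset v_u^\perp$, the collection $\{g_i\}_{i\in\mathcal{Y}_{\neg u}}$ is a set of $C-1$ unit vectors in $\mathbb{R}^{C-2}$ with common pairwise inner product $-\tfrac{1}{C-2}$ and zero sum, i.e.\ a simplex ETF of size $C-1$.

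I do not expect a genuine obstacle here: the argument is an elementary Gram-matrix computation once $P v_i$ is written out. The only place needing a little care is the dimension count — ruling out that the projected frame secretly sits in a $(C-1)$-dimensional space — which is resolved by the centering identity $\sum_i v_i = 0$, since it forces $v_u$ itself to lie in $\mathrm{span}\{v_i\}_{i\ne u}$, so projecting $v_u$ away lowers the dimension by exactly one.
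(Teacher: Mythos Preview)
Your proof is correct and follows essentially the same approach as the paper's: both write $Pv_i = v_i + \tfrac{1}{C-1}v_u$, compute the common squared norm $\tfrac{C(C-2)}{(C-1)^2}$ and off-diagonal inner product $-\tfrac{C}{(C-1)^2}$, then normalize to get $g_i^\top g_j=-\tfrac{1}{C-2}$ and verify the zero sum via $\sum_i v_i=0$. Your additional dimension-count paragraph is a nice touch the paper omits, but it does not change the strategy.
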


\begin{figure}

\centering
\tdplotsetmaincoords{70}{110}
\begin{tikzpicture}[tdplot_main_coords, scale=2.0]

  \pgfmathsetmacro{\zv}{1.0}
  \pgfmathsetmacro{\zother}{-1.0/3.0}
  \pgfmathsetmacro{\rxy}{2*sqrt(2)/3}   

  \pgfmathsetmacro{\angA}{0}
  \pgfmathsetmacro{\angB}{120}
  \pgfmathsetmacro{\angC}{240}

  \coordinate (O)  at (0,0,0);
  \coordinate (v1) at (0,0,\zv);

  \coordinate (v2) at ({\rxy*cos(\angA)},{\rxy*sin(\angA)},{\zother});
  \coordinate (v3) at ({\rxy*cos(\angB)},{\rxy*sin(\angB)},{\zother});
  \coordinate (v4) at ({\rxy*cos(\angC)},{\rxy*sin(\angC)},{\zother});

  \coordinate (u2) at ({\rxy*cos(\angA)},{\rxy*sin(\angA)},0);
  \coordinate (u3) at ({\rxy*cos(\angB)},{\rxy*sin(\angB)},0);
  \coordinate (u4) at ({\rxy*cos(\angC)},{\rxy*sin(\angC)},0);

  \filldraw[fill=gray!15, draw=gray!40]
    (-1.2,-1.0,0) -- (1.2,-1.0,0) -- (1.2,1.0,0) -- (-1.2,1.0,0) -- cycle;
  \node[gray!60] at (0.0,-1.05,0) {$v_1^\perp\ (z=0)$};

  \draw[gray!50,->] (O) -- (1.25,0,0) node[below] {$x$};
  \draw[gray!50,->] (O) -- (0,1.05,0) node[left] {$y$};
  \draw[gray!50,->] (O) -- (0,0,1.25) node[right] {$z$};

  \draw[gray!60] (v2) -- (v3) -- (v4) -- cycle;
  \draw[gray!60] (v1) -- (v2);
  \draw[gray!60] (v1) -- (v3);
  \draw[gray!60] (v1) -- (v4);

  \draw[->,thick] (O) -- (v1) node[pos=0.6,left] {$v_1$};
  \draw[->,thick] (O) -- (v2) node[pos=0.55,below right] {$v_2$};
  \draw[->,thick] (O) -- (v3) node[pos=0.55,below left] {$v_3$};
  \draw[->,thick] (O) -- (v4) node[pos=0.55,left] {$v_4$};

  \draw[densely dashed] (v2) -- (u2);
  \draw[densely dashed] (v3) -- (u3);
  \draw[densely dashed] (v4) -- (u4);

  \draw[->,thick,blue!70] (O) -- (u2) node[below right=-2pt] {$u_2$};
  \draw[->,thick,blue!70] (O) -- (u3) node[below left=-2pt] {$u_3$};
  \draw[->,thick,blue!70] (O) -- (u4) node[below] {$u_4$};

  \draw[blue!60,thick] (u2) -- (u3) -- (u4) -- cycle;

  \node[gray!60] at (0.05,0.85,0.95) {$\angle(v_i,v_j)=\arccos(-\tfrac{1}{3})$};

\end{tikzpicture}
\caption{C=4 simplex ETF. One vertex $v_1$ along $+z$; the other three lie at $z=-1/3$ with equal $120^\circ$ separation in $xy$. Orthogonal projection onto $v_1^\perp$ ($z=0$) yields an equilateral triangle formed by $u_2,u_3,u_4$.}
\label{fig:proj_etf}
\vspace{-0.3cm}
\end{figure}
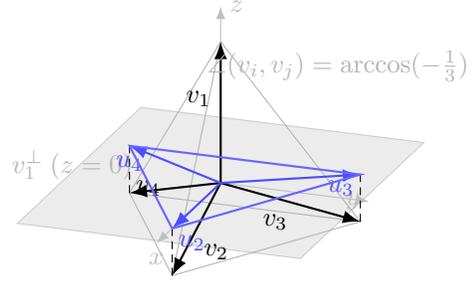

A proof of Proposition~\ref{prop:etf-projection} is given in Appendix~\ref{app:proof-etf-projection}.
This invariance implies that class forgetting via orthogonal projection maintains perfect angular separation among retained classes, forming the geometric basis of our POUR method.

\section{Method}
\begin{figure*}[t]
    \centering      
    \includegraphics[width=0.85\linewidth]{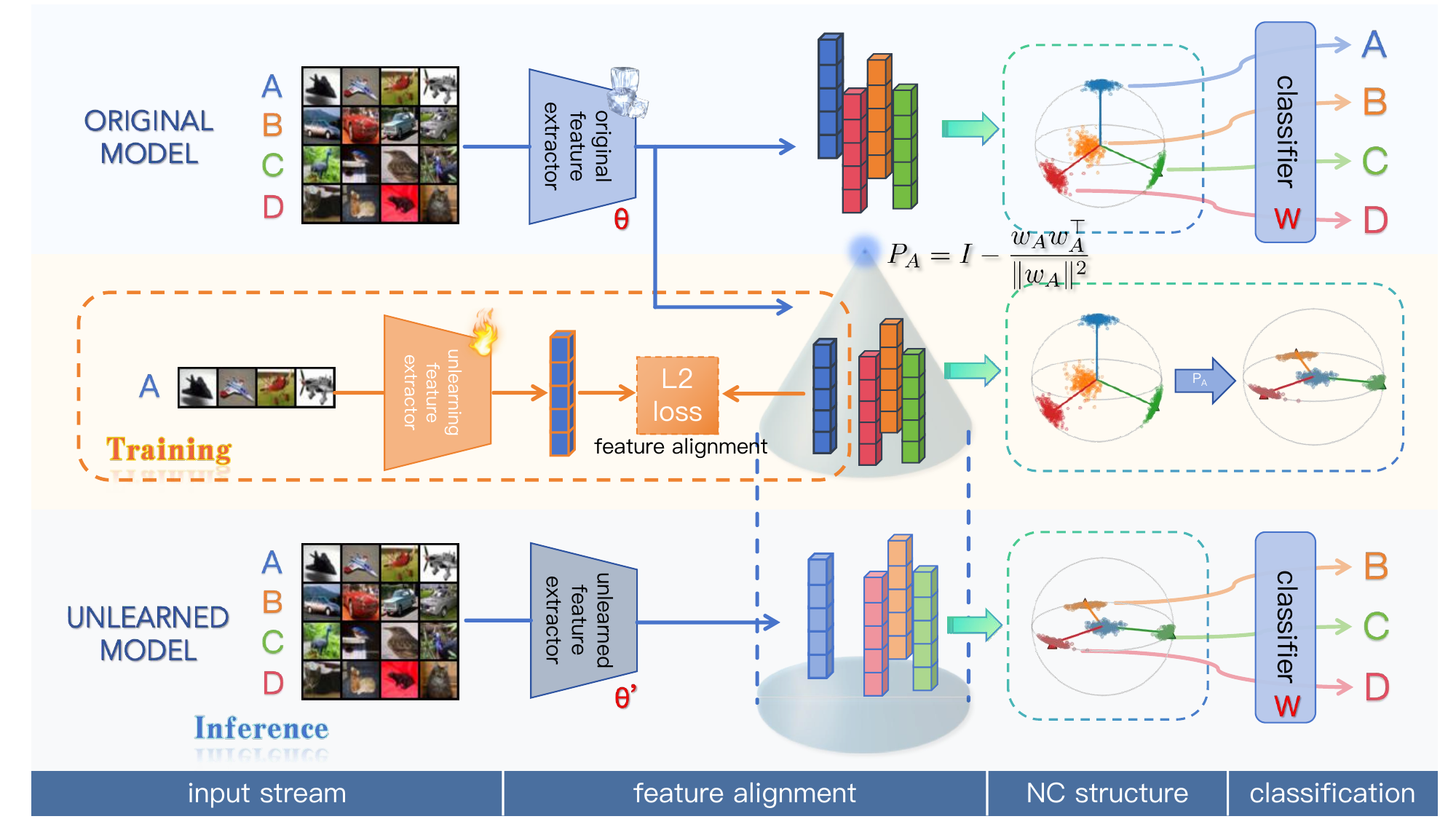} 
    \caption{\textbf{Overview of the POUR framework.}  
During training, the unlearning module applies an orthogonal projection operator $P_A$ on the feature space of the original model to remove the contribution of the forgotten class $A$.  
The unlearned feature extractor $\theta'$ is optimized via an $L_2$ loss to align its projected features with those of the original extractor $\theta$ using the unlearning data.  
This alignment preserves the Neural Collapse geometry among retained classes ($B$, $C$, $D$) while collapsing features of the forgotten class to the origin, leading to uniform predictions.  
At inference, the unlearned model is Bayes-optimal on retained classes as proved in Theorem \ref{thm:opt}.  
} 
    \label{fig:main_plot} 
    \vspace{-0.5cm}
\end{figure*}

Inspired by these theoretical insights, we propose our method POUR, which comes in two variants: a one-shot projection operation on model weights enabled by Proposition \ref{prop:etf-projection}, which we call \textbf{POUR-P}; and a distillation version using the forget set, which we call \textbf{POUR-D}, as summarized in Fig. \ref{fig:main_plot}.

\subsection{Projection Operator (POUR-P)}

We consider the class forgetting setting, where we want to forget a class $u$, so that $\mathcal{D}_f=\mathcal{D}_u:=\{x_i:y_i=u\}$.
Let $W \in \mathbb{R}^{d \times C}$ denote the classifier weight matrix, where each column $w_c$ corresponds to class $c$, and let $z \in \mathbb{R}^d$ denote the penultimate-layer feature. 
In the NC regime, both $\{w_c\}_{c=1}^C$ and the class means of $\{z\}$ form a simplex ETF.
To forget a class $u$, we define the orthogonal projection operator
\begin{equation}\label{eq:projector}
    P = I - \frac{w_u w_u^\top}{\|w_u\|^2},
\end{equation}
which removes the contribution of the forgotten class direction $w_u$. 
The unlearned features can then be obtained by
\begin{equation}
z' = P^\top z=Pz.
\end{equation}
Directly applying this projection after the feature extractor is what we call \textbf{POUR-P}.
By Proposition~\ref{prop:etf-projection}, this operation maps features into a $(C-1)$-class simplex ETF subspace, thereby preserving optimal geometry among the retained classes.

\noindent \textbf{Practical estimation of $P$.}
When classifier weights are not directly available, or when only the feature encoder is available (e.g., for vision-language models), we can estimate $w_u$ as the empirical class mean of penultimate features,
\begin{equation}
\tilde{w}_u = \frac{1}{|\mathcal{D}_u|} \sum_{x \in \mathcal{D}_u} \theta_o(x),
\end{equation}
where $\theta_o$ denotes the original feature extractor. 
The projection operator $P$ can then be constructed using $\tilde{w}_u$.

\definecolor{skyblue}{RGB}{173,216,230}

\begin{table*}[t]
\centering
\caption{
\textbf{Comparison of unlearning methods for ResNet18 on CIFAR-10.}  
We report both {classification-level} metrics and representation-level metrics.  
Methods requiring access to the retain set are included as reference values and not included in ranking.  
Values represent mean~$\pm$~std across three runs.  
Best values are in \textbf{bold}, and second-best values are \underline{underlined}.  
\colorbox{skyblue!90}{Darker blue} indicates better performance.
}
\vspace{-2mm}
\label{tab:cifar10}
\resizebox{0.97\textwidth}{!}{
\begin{tabular}{l|ccccc|ccccccc}
\toprule
\multirow{2}{*}{Method} & \multicolumn{5}{c|}{Classification-Level Metrics} & \multicolumn{7}{c}{Representation-Level Metrics} \\
\cmidrule(lr){2-6} \cmidrule(lr){7-13}
 & Acc$_r \uparrow$ & Acc$_f \downarrow$ & Acc$_{tr} \uparrow$ & Acc$_{tf} \downarrow$ & AUS $\uparrow$ 
 & rMIA $\downarrow$ & CKA$_f^{(o)} \downarrow$ & CKA$_r^{(o)} \uparrow$ & RUS$^{(o)}\uparrow$ & CKA$_f^{(r)} \uparrow$ & CKA$_r^{(r)} \uparrow$ & RUS$^{(r)}\uparrow$ \\
\midrule
Original Model & 94.47{\tiny ±0.12} & 95.03{\tiny ±0.35} & 99.99{\tiny ±0.00} & 99.99{\tiny ±0.01} & \cellcolor{skyblue!15}{0.51{\tiny ±0.00}} & 56.70{\tiny ±0.00} & 1.00{\tiny ±0.00} & 1.00{\tiny ±0.00} & 0.00{\tiny ±0.00} & 0.26{\tiny ±0.01} & 0.98{\tiny ±0.00} & \cellcolor{skyblue!50}{0.42{\tiny ±0.01}} \\
Retrained Model & 94.68{\tiny ±0.38} & 0.00{\tiny ±0.00} & 99.98{\tiny ±0.01} & 0.00{\tiny ±0.00} & \cellcolor{skyblue!90}{1.00{\tiny ±0.00}} & -- & 0.26{\tiny ±0.03} & 0.97{\tiny ±0.01} & \cellcolor{skyblue!90}{0.84{\tiny ±0.01}} & 1.00{\tiny ±0.00} & 1.00{\tiny ±0.00} & \cellcolor{skyblue!100}{1.00{\tiny ±0.00}} \\
\midrule
\rowcolor{gray!20}
\multicolumn{13}{c}{\textit{Methods requiring the retain set}} \\
\midrule
\gtext{Finetune} & \gtext{93.96{\tiny ±0.19}} & \gtext{0.00{\tiny ±0.00}} & \gtext{100.00{\tiny ±0.00}} & \gtext{0.00{\tiny ±0.00}} & \cellcolor{skyblue!85!white!80}{\gtext{0.99{\tiny ±0.00}}} & \gtext{54.60{\tiny ±1.37}} & \gtext{0.32{\tiny ±0.01}} & \gtext{0.97{\tiny ±0.01}} & \cellcolor{skyblue!70!white!80}{\gtext{0.80{\tiny ±0.00}}} & \gtext{0.63{\tiny ±0.02}} & \gtext{0.97{\tiny ±0.00}} & \cellcolor{skyblue!90!white!80}{\gtext{0.76{\tiny ±0.02}}} \\
\gtext{FCS} & \gtext{94.89{\tiny ±0.01}} & \gtext{0.67{\tiny ±0.55}} & \gtext{100.00{\tiny ±0.00}} & \gtext{0.79{\tiny ±0.67}} & \cellcolor{skyblue!88!white!80}{\gtext{1.00{\tiny ±0.00}}} & \gtext{56.00{\tiny ±0.61}} & \gtext{0.51{\tiny ±0.09}} & \gtext{1.00{\tiny ±0.02}} & \cellcolor{skyblue!25!white!80}{\gtext{0.66{\tiny ±0.08}}} & \gtext{0.45{\tiny ±0.01}} & \gtext{0.98{\tiny ±0.00}} & \cellcolor{skyblue!65!white!80}{\gtext{0.62{\tiny ±0.01}}} \\
\midrule
\rowcolor{gray!20}
\multicolumn{13}{c}{\textit{Methods on forget set only}} \\
\midrule
Random Label & 87.42{\tiny ±0.54} & 23.20{\tiny ±0.44} & 93.13{\tiny ±0.44} & 25.13{\tiny ±0.34} & \cellcolor{skyblue!25}{0.75{\tiny ±0.00}} & 54.07{\tiny ±1.31} & 0.29{\tiny ±0.05} & \underline{0.86{\tiny ±0.02}} & \cellcolor{skyblue!30}{0.78{\tiny ±0.02}} & 0.24{\tiny ±0.01} & \underline{0.84{\tiny ±0.00}} & \cellcolor{skyblue!32}{0.37{\tiny ±0.01}} \\
Gradient Ascent & 86.71{\tiny ±4.04} & 15.37{\tiny ±4.13} & 93.51{\tiny ±4.00} & 16.29{\tiny ±3.54} & \cellcolor{skyblue!30}{0.80{\tiny ±0.01}} & \textbf{50.40{\tiny ±0.82}} & \textbf{0.21{\tiny ±0.06}} & 0.80{\tiny ±0.07} & \cellcolor{skyblue!32}{0.79{\tiny ±0.02}} & 0.18{\tiny ±0.01} & 0.77{\tiny ±0.05} & \cellcolor{skyblue!20}{0.29{\tiny ±0.02}} \\

Boundary Shrink & 85.30{\tiny ±1.66} & 12.33{\tiny ±2.14} & 90.81{\tiny ±1.38} & 13.96{\tiny ±2.12} & \cellcolor{skyblue!32}{0.81{\tiny ±0.01}} & 53.07{\tiny ±1.10} & {0.25{\tiny ±0.04}} & 0.85{\tiny ±0.02} & \cellcolor{skyblue!65}{\underline{0.80{\tiny ±0.01}}} & \underline{0.28{\tiny ±0.01}} & \underline{0.84{\tiny ±0.00}} & \cellcolor{skyblue!50}{\underline{0.42{\tiny ±0.01}}} \\
Boundary Expand & 85.74{\tiny ±0.55} & 14.63{\tiny ±0.06} & 91.21{\tiny ±0.44} & 16.66{\tiny ±0.37} & \cellcolor{skyblue!32}{0.80{\tiny ±0.00}} & 53.00{\tiny ±0.78} & {0.25{\tiny ±0.04}} & 0.85{\tiny ±0.02} & \cellcolor{skyblue!65}{\underline{0.80{\tiny ±0.01}}} & \underline{0.28{\tiny ±0.01}} & 0.83{\tiny ±0.00} & \cellcolor{skyblue!50}{\underline{0.42{\tiny ±0.01}}} \\
DELETE & 88.73{\tiny ±2.32} & 2.43{\tiny ±0.12} & 95.43{\tiny ±1.78} & 2.93{\tiny ±0.46} & \cellcolor{skyblue!50}{0.92{\tiny ±0.02}} & 53.43{\tiny ±0.40} & 0.37{\tiny ±0.09} & 0.82{\tiny ±0.03} & \cellcolor{skyblue!23}{0.71{\tiny ±0.05}} & 0.26{\tiny ±0.01} & 0.78{\tiny ±0.02} & \cellcolor{skyblue!35}{0.39{\tiny ±0.01}} \\
\midrule
POUR-P\textsuperscript{\dag} (ours) & \textbf{94.97{\tiny ±0.16}} & \textbf{0.00{\tiny ±0.00}} & \textbf{99.99{\tiny ±0.00}} & \textbf{0.00{\tiny ±0.00}} & \cellcolor{skyblue!90}{\textbf{1.01{\tiny ±0.00}}} & 56.67{\tiny ±1.08} & -- & -- & -- & -- & -- & -- \\

POUR-D (ours) & \underline{92.86{\tiny ±1.02}} & \underline{0.37{\tiny ±0.64}} & \underline{99.74{\tiny ±0.29}} & \underline{0.43{\tiny ±0.44}} & \cellcolor{skyblue!70}{\underline{0.97{\tiny ±0.00}}} & \underline{51.80{\tiny ±2.42}} & \underline{0.23{\tiny ±0.06}} & \textbf{0.95{\tiny ±0.02}} & \cellcolor{skyblue!90}{\textbf{0.85{\tiny ±0.03}}} & \textbf{0.31{\tiny ±0.01}} & \textbf{0.94{\tiny ±0.00}} & \cellcolor{skyblue!80}{\textbf{0.47{\tiny ±0.01}}} \\
\bottomrule
\end{tabular}
}
\vspace{0mm}
\parbox{0.9\textwidth}{\centering\footnotesize
\textsuperscript{\dag}~POUR-P does not modify the encoder representations; therefore, representation-level metrics would be unchanged and are omitted.
}
\vspace{-0.4cm}
\end{table*}

\subsection{Projection-Guided Distillation (POUR-D)}

While POUR-P provides an immediate, closed-form forgetting operation, it only modifies features post hoc. 
To induce forgetting deeper into the feature extractor and improve robustness, we introduce a teacher-student distillation~\cite{hinton2015distilling} scheme, \textbf{POUR-D}.

\noindent \textbf{Teacher construction.}
We use the projected model POUR-P as the teacher. 
Given a trained model $(\theta, W)$ and a forget class $u$, we apply the projection operator $P$ from Equation~\ref{eq:projector} to obtain a projected teacher model $(P \theta, W)$ which encodes the post-forgetting ETF geometry in the representation space.  

\noindent \textbf{Student training.}
The student model finetunes the feature extractor parameters on the forget set to align with the teacher model. In particular, for $\theta$ the feature extractor of the original model and $\theta_s$ the feature extractor of the student model,  we minimize the L2 loss defined as:
\begin{equation*}
\mathcal{L}_{\text{POUR-D}}(x) 
= \|\theta_s(x) - P\theta(x)\|_2^2,\quad x\in \mathcal{D}_f.
\end{equation*}

Under NC, the class means form a simplex ETF, and the classifier head aligns with them. 
Projection preserves this ETF structure for retained classes. 
This loss penalizes deviations from the projected ETF features, ensuring that the student model remains aligned in both direction and scale with the teacher, while requiring minimal updates to the feature extractor parameters.
Convergence can be guaranteed by the following proposition. 

\begin{proposition}[L2 convergence implies CKA convergence]
\label{prop:l2-to-cka}
Let $Z,T\in\mathbb{R}^{n\times p}$ be row-centered, and assume $TT^\top\neq 0$.
If $\|Z-T\|_F \to 0$, then
$
\mathrm{CKA}(Z,T) \to 1.
$
\end{proposition}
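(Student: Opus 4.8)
The plan is to read $\mathrm{CKA}(Z,T)$ as the cosine of the angle between the Gram matrices $A := ZZ^\top$ and $B := TT^\top$ in the Hilbert space $(\mathbb{R}^{n\times n},\langle\cdot,\cdot\rangle_F)$, and then to argue purely by continuity: if $Z\to T$ in Frobenius norm, then $A\to B$, so the angle between $A$ and $B$ tends to zero and its cosine to $1$. The only points requiring care are (i) that the Gram map $Z\mapsto ZZ^\top$ is Lipschitz on bounded sets, and (ii) that the denominator $\|A\|_F\|B\|_F$ stays bounded away from $0$, which is exactly where the hypothesis $TT^\top\neq 0$ is used.

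First I would bound $\|A-B\|_F$. Writing $ZZ^\top - TT^\top = Z(Z-T)^\top + (Z-T)T^\top$ and using submultiplicativity of the Frobenius norm gives
\begin{equation*}
\|A - B\|_F \;\le\; \big(\|Z\|_F + \|T\|_F\big)\,\|Z - T\|_F .
\end{equation*}
Since $\|Z-T\|_F\to 0$, eventually $\|Z\|_F \le \|T\|_F + 1$, so $\|A-B\|_F\to 0$; in fact $\|A-B\|_F = O(\|Z-T\|_F)$ with constant controlled by $\|T\|_F$.

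Next I would turn this into a bound on $1-\mathrm{CKA}$. Set $b := \|B\|_F > 0$ (here $TT^\top\neq0$ enters) and $\delta := \|A-B\|_F$. Expanding $\langle A,B\rangle_F = \|B\|_F^2 + \langle A-B,B\rangle_F$ and applying Cauchy--Schwarz yields $\langle A,B\rangle_F \ge b^2 - \delta b$, while the triangle inequality gives $\|A\|_F \le b + \delta$. Hence, for $\delta < b$,
\begin{equation*}
\mathrm{CKA}(Z,T) \;=\; \frac{\langle A,B\rangle_F}{\|A\|_F\,\|B\|_F} \;\ge\; \frac{b^2 - \delta b}{(b+\delta)\,b} \;=\; \frac{b-\delta}{b+\delta}\;\xrightarrow[\delta\to0]{}\;1 .
\end{equation*}
Together with the elementary upper bound $\mathrm{CKA}(Z,T)\le 1$ (Cauchy--Schwarz in $\langle\cdot,\cdot\rangle_F$), this forces $\mathrm{CKA}(Z,T)\to1$; the inequality $\|A\|_F\ge b-\delta$ also shows the denominator is nonzero along the sequence, so the ratio is well defined.

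I do not expect a real obstacle: the statement is a soft continuity argument rather than a delicate inequality. The one place needing attention is keeping both $\|A\|_F$ and $\|B\|_F$ bounded away from zero (handled by $TT^\top\neq 0$ and $\|A\|_F\ge b-\delta$), and, if desired, tracking constants so that one gets the quantitative rate $1-\mathrm{CKA}(Z,T) = O\!\big(\|Z-T\|_F/\|TT^\top\|_F\big)$ rather than merely a qualitative limit — this quantitative form is what actually licenses using the $L_2$ distillation loss of POUR-D as a surrogate for CKA alignment.
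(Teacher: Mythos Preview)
Your proof is correct and follows the same approach as the paper: continuity of the Gram map $Z\mapsto ZZ^\top$ gives $ZZ^\top\to TT^\top$ in Frobenius norm, and then continuity of the CKA ratio (with $\|TT^\top\|_F>0$ keeping the denominator bounded away from zero) yields the limit. The only difference is that the paper simply invokes continuity in one line, whereas you unpack it into an explicit Lipschitz bound and the sandwich $(b-\delta)/(b+\delta)\le\mathrm{CKA}\le 1$, which additionally delivers the quantitative rate $1-\mathrm{CKA}=O(\|Z-T\|_F/\|TT^\top\|_F)$ that the paper does not state.
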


\subsection{Optimality of Projection for Weak Unlearning}
\label{sec:opt-proj}

We now show that the proposed projection operator is optimal under the definition of
representation-level weak unlearning (Def.~\ref{def:rep-unlearn}). 
Projecting onto the orthogonal complement of the forgotten class removes its contribution
while preserving the Bayes-optimal ETF geometry of the retained classes.

\begin{theorem}[Optimality of POUR-P]
\label{thm:opt}
Assume (A1)--(A5) as in Appendix \ref{app:nc_assumptions} in the model training pipeline, and class priors are balanced and isotropic Gaussians, as described in Section \ref{subsec:class_assump}. Let $\theta(x)$ denote the penultimate layer features, $v_i$ the class means, then by NC, we have $\theta(x)\mid(y=i)\sim\mathcal N(v_i,\sigma^2 I_d)$, 
where $\{v_i\}_{i=1}^{C}$ form a simplex ETF.  

Now fix a class $u\in\mathcal Y$ and define the orthogonal projection 
$P=I-v_u v_u^\top$, projected features $\theta'(x)=P\theta(x)$, 
and $\tilde v_i=P v_i/\|P v_i\|$ for $i\neq u$. 
Then:

\begin{enumerate}[label=(\alph*)]
\item \textbf{Retained optimality and ETF equivalence.} The projected means $\{\tilde v_i\}_{i\neq u}$ form a simplex ETF (Prop.~\ref{prop:etf-projection}).
The retained-class ETF of the projected model and that of $M_r$ differ by at most an orthogonal transform, i.e., for any discrepancy $\mathcal K$ invariant under orthogonal transforms and rescaling, $\mathcal K(P_{\neg u},Q_{\neg u})=0.$

Moreover, under the Gaussian model, when class means dominate intra-class variability, the projected model is Bayes-optimal (Prop.~\ref{prop:etf-opt}).

\item \textbf{Complete forgetting.}  
Since $P v_u=0$, features of the forgotten class satisfy 
$\theta'(x)\!\mid\!(y=u)\!\sim\!\mathcal N(0,\sigma^2 P)$. 
In the low-variance (NC) limit $\sigma^2\!\to\!0$, 
$\theta'(x)\!\to\!0$ and all retained logits vanish, 
so $q_{\neg u}(\cdot|x)\!\to\!U_{\neg u}$:  
the forgotten class is represented by a uniform predictive distribution among the retained classes, i.e., $\alpha=0$.
\end{enumerate}
\end{theorem}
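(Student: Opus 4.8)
The plan is to handle (a) and (b) separately, each one reducing to results already stated in the paper together with a single structural fact.

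\emph{Part (a).} The assertion that $\{\tilde v_i\}_{i\neq u}$ is a simplex ETF of size $C-1$ is exactly Proposition~\ref{prop:etf-projection} applied with $P=I-v_uv_u^\top$, which coincides with the operator of Eq.~\ref{eq:projector} once NC forces $w_u\parallel v_u$. For the congruence with $M_r$, I would apply assumptions (A1)--(A5) and balancedness to $M_r$ itself: since $M_r$ is trained from scratch on the $(C-1)$-class dataset $\mathcal D_r$, NC gives that its class means also form a normalized simplex ETF of size $C-1$, spanning a $(C-2)$-dimensional subspace of $\R^d$, with isotropic class-conditional Gaussians. Because any two normalized simplex ETFs of equal cardinality are congruent under an orthogonal transformation of $\R^d$ (they are regular simplices of equal edge length, unique up to rotation and reflection within their span), there is an orthogonal $O$ carrying $v_u^\perp$ onto $M_r$'s signal subspace and $\{\tilde v_i\}$ onto $M_r$'s retained class means; since moreover $\|Pv_i\|=\sqrt{1-(C-1)^{-2}}$ is the same constant over all retained $i$, the only residual differences between the projected retained-class law $P_{\neg u}$ and $Q_{\neg u}$ are this rotation, a uniform rescaling, and the common isotropic noise, all of which an orthogonal- and scale-invariant $\mathcal K$ ignores; hence $\mathcal K(P_{\neg u},Q_{\neg u})=0$, cleanly in the NC limit $\sigma^2\to0$ where both laws collapse onto the same discrete ETF configuration.

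\emph{Part (a), Bayes-optimality, and part (b).} For the Bayes-optimality clause I would first check that the projected classifier realizes the nearest-projected-class-mean rule: on $z'=Pz$ the retained logit is $w_c^\top z'=(Pw_c)^\top z\propto \tilde v_c^\top z'$ with the proportionality constant common over $c\neq u$, while the forgotten logit is $w_u^\top z'=(Pw_u)^\top z=0$ identically because $Pv_u=0$; on a retained input the winning retained logit is strictly positive (proportional to $\|Pv_i\|^2=1-(C-1)^{-2}$ for the true class $i$ in the NC limit) and therefore beats the zero forgotten logit. Proposition~\ref{prop:etf-opt}(iii) then identifies this rule with the Bayes-optimal classifier for the $(C-1)$-class problem as $\sigma^2\to0$ (equivalently $\kappa\to\infty$). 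Part (b) is a one-line pushforward: since $P=P^\top=P^2$ and $Pv_u=0$, for $y=u$ the feature $\theta'(x)=P\theta(x)$ is the image of $\mathcal N(v_u,\sigma^2 I_d)$ under $P$, namely $\mathcal N(Pv_u,\sigma^2 PP^\top)=\mathcal N(0,\sigma^2 P)$; letting $\sigma^2\to0$ this converges weakly to the point mass at the origin, so by continuity of $z\mapsto W^\top z$ and of softmax every retained logit $w_c^\top\theta'(x)\to0$, the predictive law over the retained classes tends to the uniform $U_{\neg u}$, and with the forgotten output removed after unlearning (equivalently, its logit is $0$ and never strictly wins) we get $P_z^{(f)}(\hat y=u)=0$, i.e.\ $\alpha=0$.

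\emph{Main obstacle.} The delicate point is making the equality $\mathcal K(P_{\neg u},Q_{\neg u})=0$ precise: one must confirm that the full residual gap --- a global rotation, a uniform scale, the shared isotropic noise level, and the noise-only directions lying outside the $(C-2)$-dimensional signal subspace --- is entirely invisible to the stated invariances of $\mathcal K$, which holds exactly in the NC limit $\sigma^2\to0$ and only approximately for noise-sensitive choices such as CKA at finite $\sigma$. A minor secondary subtlety is the tie-breaking convention behind $\alpha=0$ rather than $\alpha=1/C$ in part (b), which is resolved by treating the unlearned model as having the forgotten output column removed, consistent with the $\beta=0$ reference setting used throughout Section~\ref{subsec:theo_charac}.
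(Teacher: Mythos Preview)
Your proposal is correct and follows essentially the same route as the paper: invoke Proposition~\ref{prop:etf-projection} for ETF preservation, push the Gaussian through $P$ to get $\mathcal N(Pv_i,\sigma^2 P)$ on retained classes and $\mathcal N(0,\sigma^2 P)$ on the forgotten one, apply Proposition~\ref{prop:etf-opt} (the paper's appendix version, Prop.~\ref{prop:etf-bayes-opt}) for Bayes optimality on $\mathcal Y_{\neg u}$, and use continuity of softmax plus $\sigma^2\to 0$ concentration for the uniform-prediction claim in (b). The one place you actually go beyond the paper is the congruence argument for $\mathcal K(P_{\neg u},Q_{\neg u})=0$: the paper's formal proof in the appendix restates the theorem without that clause and never proves it, whereas you supply the missing step (both ETFs have the same cardinality and normalization, hence differ by an orthogonal map and a global scale, which an invariant $\mathcal K$ ignores). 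Your flagged obstacles --- the finite-$\sigma$ residual and the $\alpha=0$ tie-breaking convention --- are genuine but minor, and the paper simply does not address them.
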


Complete statement and proof are included in Appendix \ref{app:main_theorem}.
Consequently, the POUR-P projection yields a representation that is (i) Bayes-optimal on $\mathcal Y_{\neg u}$ and (ii) representation-equivalent to the retrained model up to orthogonal gauge freedom, so that the representation-level discrepancy $\mathcal K(P_z^{(f)},P_z^{(f)})$ attains its minimum under Def.~\ref{def:rep-unlearn}.

\section{Experiments and Results}

We evaluate both POUR-P and POUR-D on CIFAR-10/100 with ResNet-18 and PathMNIST with pretrained ViT-S/16. 

\subsection{Experimental Setup}
\label{subsec:exp-setup}

\noindent \textbf{Protocol constraints.}
We follow the standard unlearning setting in recent literature \citep{zhou2025decoupled}, that assumes
(i) no access to the retained set $\mathcal{D}_r$ during unlearning, and 
(ii) no intervention in the original training procedure. 
All methods are applied directly to the already trained model. 


\noindent \textbf{Datasets and Models.}
For {CIFAR-10} and {CIFAR-100}, we implement modified {ResNet-18} backbones in which the initial $7{\times}7$ convolution (stride 2) is replaced by a $3{\times}3$ convolution (stride 1) and the subsequent max-pooling layer is removed to better suit $32{\times}32$ inputs. 
For {PathMNIST}~\cite{yang2023medmnist}, we tested in a pretraining setting, where we loaded a {ViT-S/16} pretrained on ImageNet and trained a classifier head. We evaluate performance on both internal and external test sets for the same task, which exhibits a domain shift. 

\noindent \textbf{Baselines.}
We benchmark POUR-P and POUR-D against a diverse set of existing unlearning strategies, including Finetune, FCS~\citep{cadet2024deep}, Random Label~\citep{graves2021amnesiac}, Gradient Ascent~\citep{graves2021amnesiac}, Boundary Shrink, Boundary Expand~\citep{chen2023boundary}, and DELETE~\citep{zhou2025decoupled}. 
Original Model and Retrain Model serve as the lower and upper bounds, respectively.

\noindent \textbf{Metrics.}
We report the following metrics:
\begin{itemize}\setlength{\itemsep}{2pt}
\item \textbf{Acc$_f$}, \textbf{Acc$_{tf}$} (\% $\downarrow$) and \textbf{Acc$_r$}, \textbf{Acc$_{tr}$} (\% $\uparrow$):
validation and training accuracy on the forget and retain sets, respectively.
\item \textbf{Adaptive Unlearning
Score (AUS) } ($\uparrow$) \citep{cotogni2023duck, li2025machine} :
Jointly capture retention and forgetting accuracy, defined as
$
\text{AUS} = \frac{1 - drop_r }{1 + acc_f},
$
where $drop_r$, and $acc_f$ denote the drop in retain accuracy and forgotten-class accuracy. 

\item \textbf{rMIA} (\% $\downarrow$):
representation-level membership-inference attack success rate on $\mathcal{D}_f$. We perform a five-fold attack using a linear regressor on the representation between the train and test sets. 
\item \textbf{CKA$_f^{(o)}$}, \textbf{CKA$_r^{(o)}$, RUS$^{(o)}$} ($\downarrow$, $\uparrow$, $\uparrow$) and \textbf{CKA$_f^{(r)}$}, \textbf{CKA$_r^{(r)}$, RUS$^{(r)}$} ($\uparrow$, $\uparrow$, $\uparrow$): as defined in Sect. \ref{sec:cka}.

\end{itemize}

\subsection{Unlearning on CIFAR-10/100}
\label{subsec:cifar100}

{On CIFAR-10}, as shown in Table \ref{tab:cifar10}, our method achieves the best performance for both the classification-level metric (AUS) and the representation-level metrics (RUS), suggesting that POUR enables efficient forgetting directly in representation space. 
In contrast, other methods, such as Gradient Ascent, Boundary Shrink, and DELETE, show lower AUS or RUS scores, indicating that their forgetting is either incomplete or occurs primarily at the classifier layer without sufficiently modifying the underlying representation geometry, as visualized in Figure \ref{fig:tsne_cifar10}. 
This also provide a parallel comparison of the two variants of the RUS. In general, these two scores establish a similar trend. 

\definecolor{skyblue}{RGB}{173,216,230}

\begin{table}[t]

\centering
\caption{\textbf{Comparison of unlearning methods for ResNet18 on CIFAR-100.} Best values are in \textbf{bold}, and second-best values are \underline{underlined}.  
\colorbox{skyblue!90}{Darker blue} indicates better performance.}
\vspace{-2mm}
\label{tab:cifar100}
\resizebox{0.49\textwidth}{!}{
\begin{tabular}{l|ccccccc}
\toprule
Method & Acc$_{r}$ $\uparrow$ & Acc$_{f}$ $\downarrow$ & AUS $\uparrow$ & CKA$_{f}^{(r)}$ $\uparrow$ & CKA$_{r}^{(r)}$ $\uparrow$ & RUS$^{(r)}$ $\uparrow$ & rMIA $\downarrow$ \\
\midrule
Original Model & 77.69 & 92.00 & \cellcolor{skyblue!20}0.52 & 0.60 & 0.78 & \cellcolor{skyblue!60}0.68 & 62.00 \\
Retrained Model  & 76.28 & 0.00 & \cellcolor{skyblue!90}1.00 & 1.00 & 1.00 & \cellcolor{skyblue!90}1.00 & -- \\
\midrule
\rowcolor{gray!20}
\multicolumn{8}{c}{\textit{Methods requiring the retain set}} \\
\midrule
\gtext{Finetune} & \gtext{76.32} & \gtext{0.00} & \cellcolor{skyblue!83}\gtext{0.99} & \gtext{0.57} & \gtext{0.76} & \cellcolor{skyblue!57}\gtext{0.67} & \gtext{54.00} \\
\gtext{FCS} & \gtext{76.81} & \gtext{2.00} & \cellcolor{skyblue!77}\gtext{0.97} & \gtext{0.61} & \gtext{0.78} & \cellcolor{skyblue!60}\gtext{0.68} & \gtext{55.00} \\
\midrule
\rowcolor{gray!20}
\multicolumn{8}{c}{\textit{Methods on forget set only}} \\
\midrule
Random Label & 61.98 & 11.00 & \cellcolor{skyblue!40}0.76 & 0.40 & 0.53 & \cellcolor{skyblue!23}0.46 & 49.00 \\
Gradient Ascent & 50.46 & 6.00 & \cellcolor{skyblue!25}0.69 & 0.44 & 0.44 & \cellcolor{skyblue!20}0.44 & 50.00 \\
Boundary Shrink & 68.87 & 4.00 & \cellcolor{skyblue!60}0.88 & \underline{0.55} & 0.72 & \cellcolor{skyblue!45}0.62 & 49.00 \\
Boundary Expand & 66.47 & 13.00 & \cellcolor{skyblue!47}0.79 & \underline{0.55} & \underline{0.74} & \cellcolor{skyblue!48}\underline{0.63} & \textbf{42.00} \\
DELETE & 64.67 & 8.00 & \cellcolor{skyblue!53}0.81 & 0.51 & 0.67 & \cellcolor{skyblue!37}0.58 & 60.00 \\
\midrule
POUR-P\textsuperscript{\dag} (ours) & \textbf{77.65} & \textbf{0.00} & \cellcolor{skyblue!90}\textbf{1.00} & \textbf{--} & \textbf{--} & \cellcolor{skyblue!60}\textbf{--} & 62.00 \\
POUR-D (ours) & \underline{73.44} & \underline{1.00} & \cellcolor{skyblue!80}\underline{0.95} & \textbf{0.57} & \textbf{0.76} & \cellcolor{skyblue!53}\textbf{0.65} & \underline{46.00} \\
\bottomrule
\end{tabular}
}
{\scriptsize
\textsuperscript{\dag}~POUR-P does not modify the encoder representations.
}
\vspace{-0.4cm}
\end{table}

\begin{figure}[t]
    \centering
    \begin{subfigure}[b]{0.9\linewidth}
        \centering
        \includegraphics[width=\linewidth, trim={3cm 2.8cm 12cm 1.4cm}, clip]{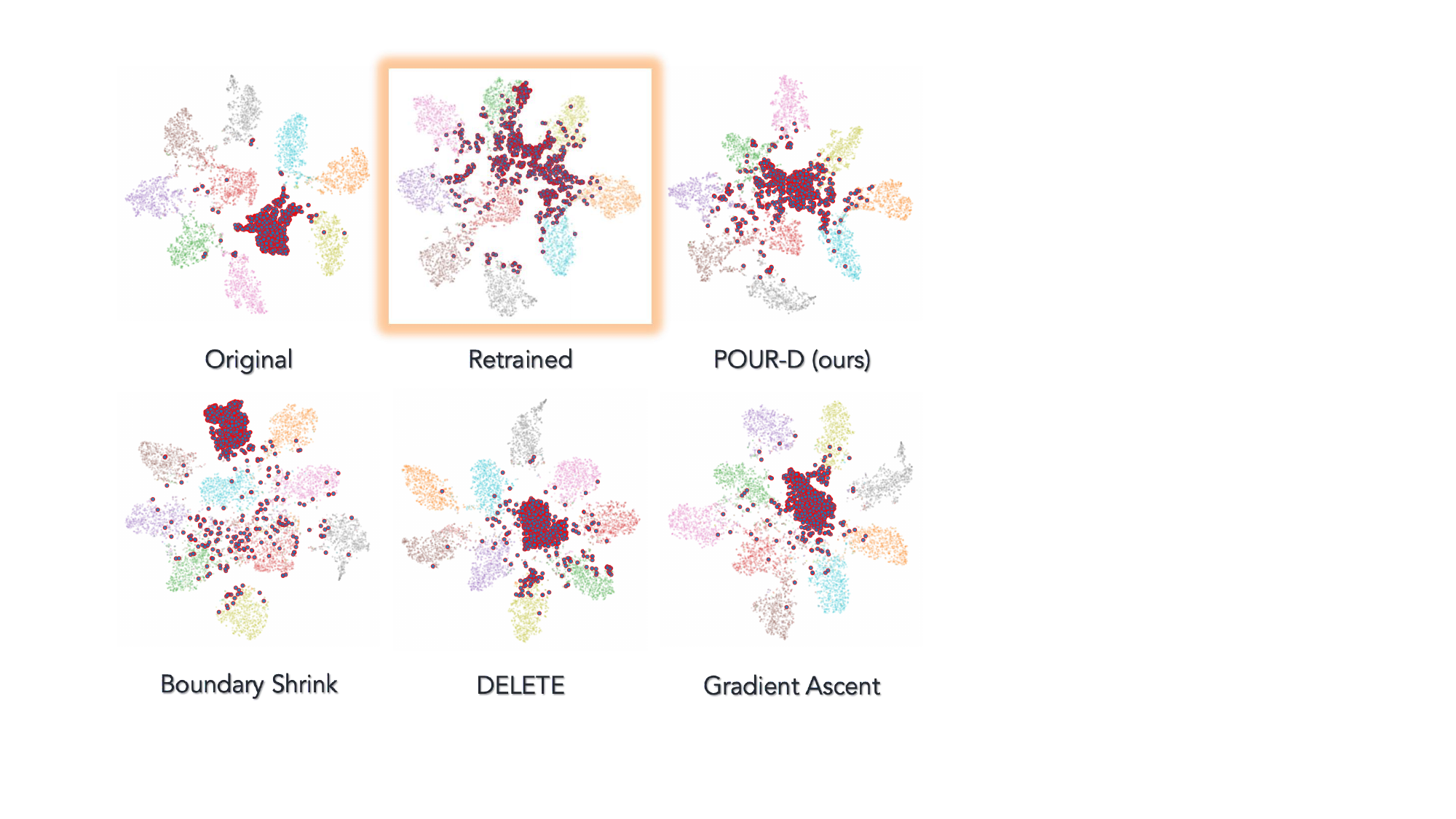}
        \caption{{t-SNE visualization on CIFAR10.} }
        \label{fig:tsne_cifar10}
    \end{subfigure}
    \hfill
    \begin{subfigure}[b]{0.9\linewidth}
        \centering
        \includegraphics[width=\linewidth, trim={1.6cm 1.2cm 9cm 0.3cm}, clip]{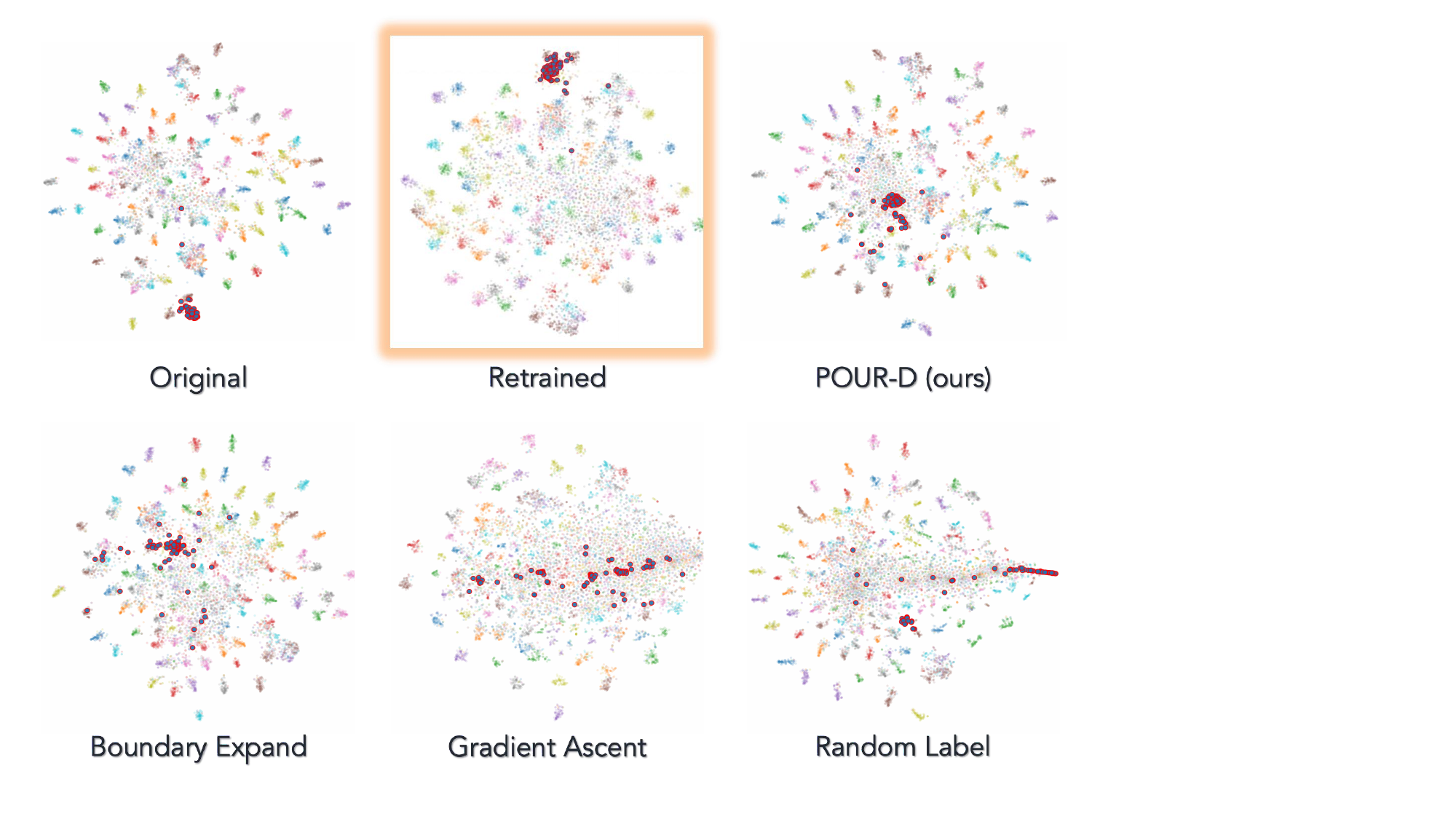}
        \caption{{t-SNE visualization on CIFAR100.} }
        \label{fig:tsne_cifar100}
    \end{subfigure}
    \vspace{-0.1cm}
    \caption{\textbf{t-SNE visualization of representation spaces after unlearning on CIFAR-10 and CIFAR-100.} Each color denotes a retained class, with dark red points represent the forgotten class. The Gold panel shows the representation of the retrained model, serving as one possible reference for successful unlearning. 
    Structure of representations after POUR unlearning mostly resemble that of the retrained gold model.}
    \label{fig:tsne}
    \vspace{-0.6cm}
\end{figure}

{On the more challenging CIFAR-100 dataset}, as shown in Table \ref{tab:cifar100}, our method again achieves state-of-the-art (SOTA) performance for both classification and representation-level metrics. 
We note that on CIFAR-100, classes are more entangled, as shown as a high CKA$_f^{(r)}$ and visualized in Figure \ref{fig:tsne_cifar100}. Therefore, supervision on the forget set is lower and therefore forgetting is harder, as discussed in Section \ref{subsec:theo_charac}. Methods such as gradient ascent and random labels largely disrupt the structure of the retained classes. Boundary Shrink and Boundary Expand, though among the stronger baselines, fail to reproduce the structure of the retrained model representations as effectively as POUR.

\definecolor{skyblue}{RGB}{173,216,230}

\begin{table*}[t]
\label{tab:main-results}
\centering
\caption{
\textbf{Comparison of unlearning methods on PathMNIST with ViT.}  
Performance is reported on both the \emph{internal} and \emph{external test} sets under domain shift.
Best values are in \textbf{bold}, and second-best values are \underline{underlined}.  
\colorbox{skyblue!90}{Darker blue} indicates better performance.
}
\vspace{-2mm}
\resizebox{0.7\textwidth}{!}{
\begin{tabular}{l|ccccc|ccccc}
\toprule
\multirow{2}{*}{Method} & \multicolumn{5}{c|}{ViT on PathMNIST Internal Test} & \multicolumn{5}{c}{ViT on PathMNIST External Test} \\
\cmidrule(lr){2-6} \cmidrule(lr){7-11}
& Acc$_r \uparrow$ & Acc$_f \downarrow$ & AUS $\uparrow$ & RUS $^{(o)}\uparrow$ & rMIA $\downarrow$
& Acc$_r \uparrow$ & Acc$_f \downarrow$ & AUS $\uparrow$ & RUS $^{(o)} \uparrow$ & rMIA $\downarrow$ \\
\midrule
Original Model & {87.49} & 96.83 & \cellcolor{skyblue!20}0.51 & -- & 50.43
& {87.13} & 97.53 & \cellcolor{skyblue!20}0.51 & -- & 84.20 \\

Retrained Model$^*$ & 88.70 & 0.00 & \cellcolor{skyblue!85}1.01 & -- & -- & 88.63 & 0.00 & \cellcolor{skyblue!85}1.02 & -- & -- \\
\midrule
\rowcolor{gray!20}
\multicolumn{11}{c}{\textit{Methods requiring the retain set}} \\
\midrule
\gtext{Finetune} & \gtext{97.78} & \gtext{0.00} & \cellcolor{skyblue!90}\gtext{1.10} & \cellcolor{skyblue!20}\gtext{0.05} & \gtext{46.67}
& \gtext{86.99} & \gtext{0.00} & \cellcolor{skyblue!85}\gtext{1.00} & \cellcolor{skyblue!25}\gtext{0.06} & \gtext{86.10} \\

\gtext{FCS} & \gtext{88.81} & \gtext{0.00} & \cellcolor{skyblue!85}\gtext{1.01} & \cellcolor{skyblue!30}\gtext{0.13} & \gtext{49.00}
& \gtext{83.86} & \gtext{0.00} & \cellcolor{skyblue!80}\gtext{0.97} & \cellcolor{skyblue!30}\gtext{0.10} & \gtext{96.60} \\

\midrule
\rowcolor{gray!20}
\multicolumn{11}{c}{\textit{Methods on forget set only$^*$}} \\
\midrule
Random Label & 78.71 & 23.73 & \cellcolor{skyblue!50}0.74 & \cellcolor{skyblue!40}0.26 & \textbf{48.23}
& 70.61 & 25.34 & \cellcolor{skyblue!45}0.67 & \cellcolor{skyblue!45}0.29 & \textbf{83.60} \\

Gradient Ascent & \underline{81.43} & 9.03 & \cellcolor{skyblue!60}0.86 & \cellcolor{skyblue!35}0.22 & \underline{48.71}
& 76.72 & 10.61 & \cellcolor{skyblue!58}0.81 & \cellcolor{skyblue!40}0.26 & \underline{83.90} \\



DELETE & 72.75 & \textbf{0.00} & \cellcolor{skyblue!58}0.85 & \underline{\cellcolor{skyblue!65}0.50} & {49.76}
& 73.04 & \textbf{0.00} & \cellcolor{skyblue!60}0.86 & \underline{\cellcolor{skyblue!58}0.42} & 88.40 \\

\midrule
POUR-P (ours) & \textbf{87.14} & \textbf{0.00} & \textbf{\cellcolor{skyblue!88}1.00} & \cellcolor{skyblue!15}-- & 50.43
& \textbf{87.44} & \textbf{0.00} & \textbf{\cellcolor{skyblue!85}1.00} & \cellcolor{skyblue!15}-- & {84.20} \\

POUR-D (ours) & 81.09 & \underline{7.88} & \underline{\cellcolor{skyblue!65}0.87} & \textbf{\cellcolor{skyblue!80}0.63} & 51.00
& \underline{80.90} & \underline{7.92} & \underline{\cellcolor{skyblue!63}0.87} & \textbf{\cellcolor{skyblue!75}0.61} & 85.20 \\
\bottomrule
\end{tabular}
}
\vspace{0mm}
\parbox{\textwidth}{\centering\footnotesize
*Note: Boundary Shrink and Boundary Expand did not work in this setting. Retraining only performed on classifier head.
}
\vspace{-0.7cm}
\end{table*}

\subsection{{Unlearning on PathMNIST}}

PathMNIST exhibits a substantial domain shift between its internal and external test sets due to differences in slide acquisition. The ViT backbone is pretrained on ImageNet with a finetuned classifier head on PathMNIST, which makes this setting challenging for existing unlearning methods, as the learning is shallow. 

Our method again achieves SOTA in this setting. As shown in Figure \ref{fig:gradcam}, the activation on the forget set vanishes after POUR-P unlearning, meaning unlearning signals successfully propagate from the finetuned classifier head into the pretrained backbone. Methods like Boundary Shrink and Boundary Expand fail in this setting.
Random Label and Gradient Ascent exhibit higher classification-level performance on the internal test set than on the external test set. 
We hypothesize that these methods are ``learning to mask'' the forget set rather than genuinely erasing the associated knowledge, resulting in poor generalization. 
In contrast, DELETE and POUR achieve consistent performance across both domains, suggesting that they perform true knowledge removal rather than overfitting to the forget set.
The effectiveness of the unlearning methods in this setting provides a scalable pathway for unlearning in pretrained and foundation models, where original data may be unavailable.

\definecolor{lightyellow}{rgb}{1.0, 0.97, 0.85}
\definecolor{lightblue}{rgb}{0.90, 0.95, 1.0}
\definecolor{darkgreen}{rgb}{0.0, 0.4, 0.0}
\definecolor{darkred}{rgb}{0.6, 0.0, 0.0}

\begin{table}[t]
\centering
\small
\renewcommand{\arraystretch}{1.1}
\vspace{-5pt}
\caption{POUR unlearning on \textbf{ImageNet} with \textbf{CLIP}.}
\label{tab:clip}
\vspace{-10pt}
\resizebox{0.98\linewidth}{!}{%
\begin{tabular}{lcccc}
\specialrule{1.5pt}{0pt}{0pt}
\multicolumn{1}{c}{\multirow{2}{*}{\textbf{Forgotten class}}} &
\multicolumn{2}{c}{\textbf{Before unlearning}} &
\multicolumn{2}{c}{\textbf{After unlearning}} \\
\cline{2-3}\cline{4-5}
& \cellcolor{lightyellow}\textbf{\ \ Acc$_r$\ \ }
& \cellcolor{lightblue}\textbf{Acc$_f$}
& \cellcolor{lightyellow}\textbf{Acc$_r$$\uparrow$ {($\Delta$)}}
& \cellcolor{lightblue}\textbf{Acc$_f$$\downarrow$ {($\Delta$)}} \\
\hline
Goldfish
& \cellcolor{lightyellow}67.36
& \cellcolor{lightblue}94.00
& \cellcolor{lightyellow}69.23\,\textcolor{darkgreen}{\scriptsize{(+1.87)}}
& \cellcolor{lightblue}14.00\,\textcolor{darkgreen}{\scriptsize{($-80.00$)}} \\
European fire salamander
& \cellcolor{lightyellow}67.36
& \cellcolor{lightblue}94.00
& \cellcolor{lightyellow}67.25\,\textcolor{darkred}{\scriptsize{($-0.11$)}}
& \cellcolor{lightblue}22.00\,\textcolor{darkgreen}{\scriptsize{($-72.00$)}} \\
Boa constrictor
& \cellcolor{lightyellow}67.53
& \cellcolor{lightblue}60.00
& \cellcolor{lightyellow}70.33\,\textcolor{darkgreen}{\scriptsize{(+2.80)}}
& \cellcolor{lightblue}6.00\,\textcolor{darkgreen}{\scriptsize{($-54.00$)}} \\
Centipede
& \cellcolor{lightyellow}67.62
& \cellcolor{lightblue}42.00
& \cellcolor{lightyellow}67.74\,\textcolor{darkgreen}{\scriptsize{(+0.12)}}
& \cellcolor{lightblue}10.00\,\textcolor{darkgreen}{\scriptsize{($-32.00$)}} \\
Bison
& \cellcolor{lightyellow}67.33
& \cellcolor{lightblue}100.00
& \cellcolor{lightyellow}67.76\,\textcolor{darkgreen}{\scriptsize{(+0.43)}}
& \cellcolor{lightblue}6.00\,\textcolor{darkgreen}{\scriptsize{($-94.00$)}} \\
\specialrule{1.5pt}{0pt}{0pt}
\end{tabular}%
}
\vspace{-8pt}
\end{table}

\subsection{NC as an Assumption}
\label{subsec:nc_pheno}

{Our analysis relies on the theory of NC, which typically emerges under sufficient overparameterization. We  also found in practice that standard training naturally reaches a regime where NC is sufficiently well established for POUR to be effective. Figure~\ref{fig:weight-angle-distributions} shows the classifier weight angle distributions across datasets, revealing how closely the trained models conform to NC geometry.
The empirical mean angles align almost perfectly with the ideal simplex ETF angle on all of the datasets.}

\begin{figure}[t]
    \centering
    \begin{subfigure}[t]{0.155\textwidth}
        \centering
        \includegraphics[height=1.6cm, width=\linewidth, trim=2 0 0 0, clip]{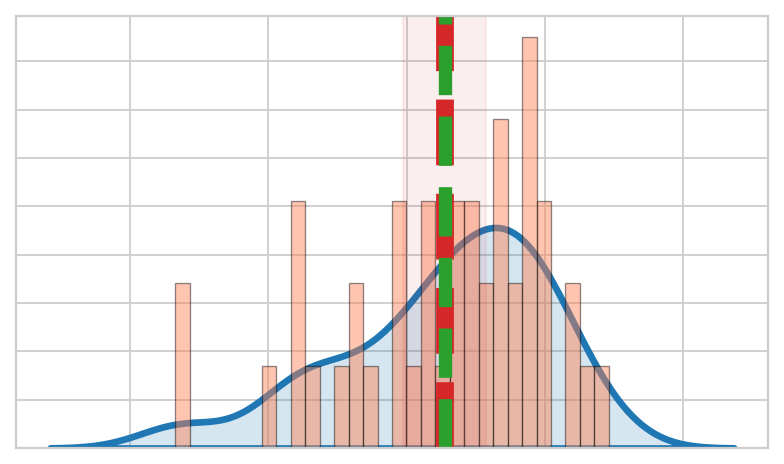}
        \caption{CIFAR-10}
        \label{fig:angle-cifar10}
    \end{subfigure}
    \hfill
    \begin{subfigure}[t]{0.155\textwidth}
        \centering
        \includegraphics[height=1.6cm, width=\linewidth, trim=12 0 0 0, clip]{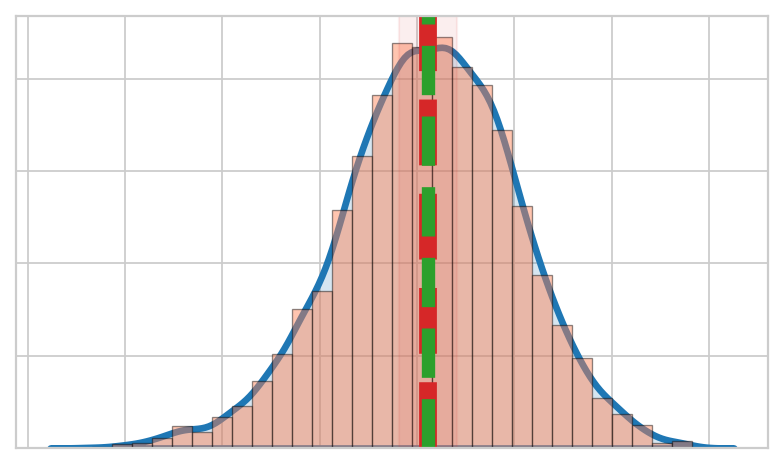}
        \caption{CIFAR-100}
        \label{fig:angle-cifar100}
    \end{subfigure}
    \hfill
    \begin{subfigure}[t]{0.155\textwidth}
        \centering
        \includegraphics[height=1.6cm, width=\linewidth, trim=2 0 0 0, clip]{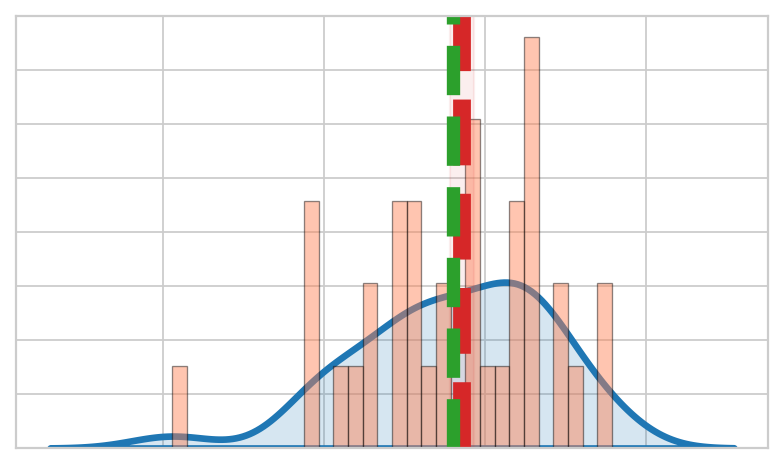}
        \caption{PathMNIST}
        \label{fig:angle-pathmnist}
    \end{subfigure}
    \vspace{-0.38cm}

    \caption{\textbf{Classifier weight angle distributions.} The green dashed line denotes the mean pairwise angle, while the red dashed line marks the ideal NC angle.
    The closeness between the two reflects how well the classifier aligns with NC geometry at convergence.}
    \vspace{-0.5cm}
    \label{fig:weight-angle-distributions}
\end{figure}

\subsection{Cross-Modal Unlearning}

Word embeddings in language or vision-language models can also exhibit near-isotropic structures~\cite{mikolov2013distributed}. We test the performance of POUR-P on CLIP-L/14~\cite{openai_clip_vit_large_patch14, radford2021learning} by removing the corresponding {text embeddings} from the 
{image encoder}. 
As shown in Figure \ref{fig:cifar10-zs-comparison} (Cifar-10) and Table \ref{tab:clip} (ImageNet), unlearning the text prompts reduces the accuracy of their associated classes while leaving the others largely unchanged.
This demonstrates that POUR remains effective in a cross-modal setting, illustrating the generality of its underlying geometric principles.

\subsection{Segmentation Unlearning}

Beyond classification, we evaluate POUR on a semantic segmentation task (VOC2012), where strong class imbalance arises from uneven pixel frequencies.
Tab.~\ref{tab:unlearning_results} and Fig.~\ref{fig:reb_img} show effective forgetting with preserved retained-class IoU.

\definecolor{lightyellow}{rgb}{1.0, 0.97, 0.85}
\definecolor{lightblue}{rgb}{0.90, 0.95, 1.0}
\definecolor{darkgreen}{rgb}{0.0, 0.4, 0.0}
\definecolor{navy}{rgb}{0.6, 0.0, 0.0}

\begin{table}[t!]
\centering
\small
\renewcommand{\arraystretch}{1.1}
\vspace{-5pt}
\caption{\textbf{Semantic segmentation unlearning} with POUR on VOC 2012 using \textbf{DeepLabV3+} with a \textbf{ResNet-101} backbone.}
\label{tab:unlearning_results}
\vspace{-10pt}
\resizebox{0.8\linewidth}{!}{%
\begin{tabular}{lcccc}
\specialrule{1.5pt}{0pt}{0pt}
\multicolumn{1}{c}{\multirow{2}{*}{\textbf{Forgotten class}}} &
\multicolumn{2}{c}{\textbf{Before unlearning}} &
\multicolumn{2}{c}{\textbf{After unlearning}} \\
\cline{2-3}\cline{4-5}
& \cellcolor{lightyellow}\textbf{\ \ IoU$_r$\ \ }
& \cellcolor{lightblue}\textbf{IoU$_f$}
& \cellcolor{lightyellow}\textbf{IoU$_r$$\uparrow$ {($\Delta$)}}
& \cellcolor{lightblue}\textbf{IoU$_f$$\downarrow$ {($\Delta$)}} \\
\hline
Dog
& \cellcolor{lightyellow}77.81
& \cellcolor{lightblue}90.98
& \cellcolor{lightyellow}71.53\,\textcolor{navy}{\scriptsize{($-6.28$)}}
& \cellcolor{lightblue}0.00\,\textcolor{darkgreen}{\scriptsize{($-90.98$)}} \\
Cat
& \cellcolor{lightyellow}77.53
& \cellcolor{lightblue}93.55
& \cellcolor{lightyellow}70.10\,\textcolor{navy}{\scriptsize{($-7.43$)}}
& \cellcolor{lightblue}0.00\,\textcolor{darkgreen}{\scriptsize{($-93.55$)}} \\
Bicycle
& \cellcolor{lightyellow}78.96
& \cellcolor{lightblue}41.26
& \cellcolor{lightyellow}77.58\,\textcolor{navy}{\scriptsize{($-1.38$)}}
& \cellcolor{lightblue}0.00\,\textcolor{darkgreen}{\scriptsize{($-41.26$)}} \\
Chair
& \cellcolor{lightyellow}78.98
& \cellcolor{lightblue}38.86
& \cellcolor{lightyellow}77.34\,\textcolor{navy}{\scriptsize{($-1.64$)}}
& \cellcolor{lightblue}0.81\,\textcolor{darkgreen}{\scriptsize{($-38.05$)}} \\
Bottle
& \cellcolor{lightyellow}78.26
& \cellcolor{lightblue}78.89
& \cellcolor{lightyellow}75.79\,\textcolor{navy}{\scriptsize{($-2.47$)}}
& \cellcolor{lightblue}0.00\,\textcolor{darkgreen}{\scriptsize{($-78.89$)}} \\
\specialrule{1.5pt}{0pt}{0pt}
\end{tabular}%
}
\vspace{-8pt}
\end{table}





\begin{figure}[t]
    \centering
    \vspace{-0mm}
    \includegraphics[width=\linewidth, height=1.8cm, trim=0 0 0 10, clip]{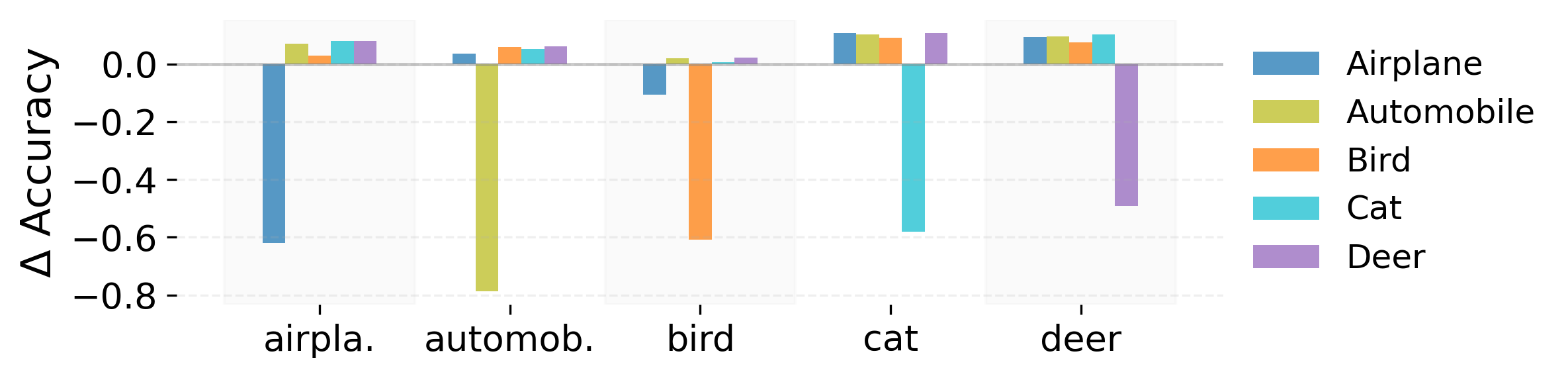}
    \vspace{-0.8cm}
    \caption{
    \textbf{Text-based zero-shot unlearning $\Delta$-accuracy on CLIP (CIFAR-10, first five classes) with POUR.}
    Each colour corresponds to unlearning a specific class with text prompts, and bars show the resulting change in classification accuracy for each class. 
    }


    \label{fig:cifar10-zs-comparison}
    \vspace{-0.3cm}
\end{figure}

\begin{figure}
    \centering
    \includegraphics[width=0.98\linewidth]{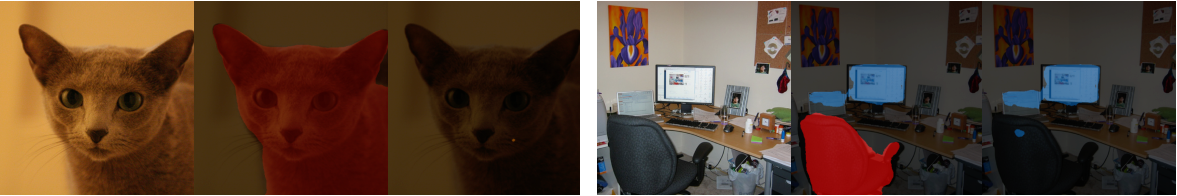}
    \vspace{-8pt}
\caption{Forgotten categories: \textbf{cat} (left) and \textbf{chair} (right).}
    \vspace{-18pt}
    \label{fig:reb_img}
\end{figure}

\section{Conclusion}

We introduced a representation-level formulation of machine unlearning and a new metric, RUS, to rigorously quantify forgetting beyond classifier logits. By connecting unlearning to NC geometry, we derived new theoretical insights which enabled POUR. Extensive experiments across CIFAR-10/100 and PathMNIST confirm that POUR achieves better forgetting than prior approaches at both the classification and representation levels, providing a principled geometric perspective on machine unlearning.

\newpage

\clearpage
{
    \small
    \bibliographystyle{ieeenat_fullname}
    \bibliography{main}
}

%

\clearpage
\setcounter{page}{1}
\setcounter{section}{0}

\maketitlesupplementary
\section*{Appendix Table of Contents}

\begin{enumerate}[label=\arabic*., start=0]
    \item Related Work
    \item Additional Baselines
    \item Additional Justifications
    \begin{enumerate}[label=\arabic{enumi}.\arabic*.]
        \item Proof on Decomposition of $\mathcal{K}$-Bound
        \item {Justification on CKA USage}
    \end{enumerate}

    \item Neural Collapse
    \begin{enumerate}[label=\arabic{enumi}.\arabic*.]
        \item Training Assumptions
        \item Neural Collapse Statements
    \end{enumerate}

    \item ETF Implies Bayes Optimality
    \begin{enumerate}[label=\arabic{enumi}.\arabic*.]
        \item Geometric Optimality of the Simplex ETF
        \item Bayes-Optimal Nearest Class Mean Rule
    \end{enumerate}

    \item Proof of Main Theorem
    \begin{enumerate}[label=\arabic{enumi}.\arabic*.]
        \item Closure of Projection
        \item Optimality of Projection
    \end{enumerate}

\end{enumerate}

\setcounter{section}{-1}
\setcounter{equation}{0}
\setcounter{figure}{0}
\vspace{-5mm}
\section{Related Work}
\paragraph{Machine Unlearning.}
The problem of removing specific training data from a model, often motivated by privacy regulations such as the ``right to be forgotten,'' was first formalized in the systems security community \citep{cao2015towards}. 
The seminal work of \citet{bourtoule2021machine} introduced the \emph{SISA} framework, partitioning training data across multiple shards so that forgetting can be achieved by retraining only the affected shards. 
Subsequent work developed more fine-grained methods that avoid full retraining. 
For linear models, \citet{guo2019certified} proposed certified removal via influence-based updates. 
\citet{sekhari2021remember} provided theoretical guarantees for approximate unlearning in general models. 
For deep networks, approaches include {amnesiac unlearning} \citep{graves2021amnesiac}, which inverts stored gradients, and Fisher information--based scrubbing \citep{golatkar2020eternal,golatkar2020forgetting}, which perturbs weights along sensitive directions. 
Other efficient methods use adversarial weight perturbations \citep{tarun2023fast}, incompetent teachers \citep{chundawat2023badteacher}, or zero-shot synthetic forget data \citep{chundawat2023zeroshot}. 
Most recently, anchored fine-tuning methods such as FAMR \citep{sanga2025famr} enforce uniform predictions on forget sets while constraining the model to remain close to its original parameters.
\citet{kodge2024deep} proposed a gradient-free method that explicitly computes class-specific subspaces via singular value decomposition and suppresses discriminatory directions associated with the forget class. 
{Boundary Shrink} and {Boundary Expand} \citep{chen2023boundary} perform local decision-boundary adjustments for forgetting, while maintaining model utility through margin control. 
{DELETE} \citep{zhou2025decoupled} formulates unlearning as a decoupled distillation problem, erasing class-specific information via probability decoupling.

\paragraph{Geometrically grounded forgetting.}
Several methods exploit the geometry of learned representations. 
\citet{kodge2024deep} proposed a gradient-free method that explicitly computes class-specific subspaces via singular value decomposition and suppresses discriminatory directions associated with the forget class. 
Yet, none of the previous approaches connects to the phenomenon of {Neural Collapse} \citep{papyan2020neuralcollapse}, wherein class features converge to a simplex equiangular tight frame.

\paragraph{Concept-level and multimodal unlearning.}
Beyond class forgetting, recent research has explored erasing visual concepts and multimodal associations. 
In generative models, concept erasure can be achieved by regularizing style features or Gram matrices \citep{zhang2024forgetmenot}. 
In multimodal settings, \citet{yang2025cliperase} proposed \emph{CLIPErase}, which disentangles forgetting, retention, and consistency modules to remove specific visual-textual alignments in CLIP. 
\citet{kravets2025zeroshot} introduced a zero-shot unlearning method for CLIP that generates synthetic forget samples via gradient ascent. 

\section{Additional Baselines} 
We additionally report results (forget airplane class) for method [22], SalUn~\cite{fan2023salun}, SCRUB~\cite{kurmanji2023towards} and SURE~\cite{sepahvand2025selective} in Tab.~\ref{tab:baseline}. POUR again achieves strongest performance across both classification- and representation-level metrics.

\begin{table}[t!]
\centering
\renewcommand{\arraystretch}{1.05}
\small
\caption{\textbf{More baselines} on CIFAR-10.}
\label{tab:baseline}
\vspace{-10pt}
\resizebox{0.98\linewidth}{!}{
\begin{tabular}{l c c c c c c}
\specialrule{1.5pt}{0pt}{0pt}
\textbf{Method} &
Acc$_r\uparrow$ &
Acc$_f\downarrow$ &
AUS$\uparrow$ &
CKA$_f\downarrow$ &
CKA$_r\uparrow$ &
RUS$^{(o)}\uparrow$ \\
\hline
Original          & 94.61 & 94.73 & 0.51 & 1.00 & 1.00 & 0.00 \\
ProjUn~[22]       & 86.49 & 20.40 & 0.76 & 0.34 & 0.89 & 0.76 \\
SalUn             & {92.72} & \underline{0.78}  & \textbf{0.97} & \underline{0.28} & 0.92 & \underline{0.81} \\
SCRUB             & \textbf{93.34} & 3.80  & \underline{0.95} & 0.31 & \textbf{0.98} & \underline{0.81} \\
SURE              & 92.50 & 2.60  & \underline{0.95} & 0.34 & \textbf{0.98} & 0.79 \\
\rowcolor{green!10}
{POUR}     & \underline{92.86} & \textbf{0.37}  & \textbf{0.97} & \textbf{0.23} & \underline{0.95} & \textbf{0.85 }\\
\specialrule{1.5pt}{0pt}{0pt}
\end{tabular}}
\vspace{-8pt}
\end{table}

\section{Additional Justifications}

\subsection{Proof on Decomposition of K-Bound}

\label{lem:decoupled-k}
Let $\mathcal{Z}$ denote the feature space and $\mathcal{P}(\mathcal{Z})$ the set of probability measures on it.
Fix a symmetric function class $\mathcal{F}\subseteq\{\varphi:\mathcal{Z}\!\to\!\mathbb{R}\}$ (i.e., $\varphi\!\in\!\mathcal{F}\Rightarrow -\varphi\!\in\!\mathcal{F}$).
For an Integral Probability Metric (IPM) defined as
\[
\mathcal{K}(P,Q)
=\sup_{\varphi\in\mathcal{F}}
\big|\mathbb{E}_{z\sim P}[\varphi(z)]-\mathbb{E}_{z\sim Q}[\varphi(z)]\big|,
\  P,Q\in\mathcal{P}(\mathcal{Z}),
\]
the following property holds.

\newpage
\begin{center}
\vspace*{-1cm}
\begin{minipage}{\textwidth}
\begin{proposition}[Decomposition of $\mathcal{K}$ Bound]
Fix a forgetting class $u$, and by the law of total probability, express the feature distributions as
\[
P_z=\alpha\,P_u+(1-\alpha)\,P_{\neg u},\qquad
Q_z=\beta\,Q_u+(1-\beta)\,Q_{\neg u},
\]
where $\alpha:=P(y{=}u)$ and $\beta:=Q(y{=}u)$ denote the class probabilities, and $P_{\neg u},Q_{\neg u}$ are the retained-class feature distributions.  
Let $\Delta_c = \mathcal{K}(P_u,P_{\neg u})$ denote the \emph{prior class separation} in the original model.  
Then the discrepancy between the unlearned and reference feature distributions is bounded as
\begin{align*}
&\big|\,\beta\,\mathcal{K}(P_u,Q_u)
 - (1-\beta)\,\mathcal{K}(P_{\neg u},Q_{\neg u})\,\big|
 - |\,\alpha-\beta\,|\,\Delta_c
\\[3pt]
&\hspace{2cm}\le\;
\mathcal{K}(P_z,Q_z)
\\[3pt]
&\hspace{2cm}\le\;
\underbrace{|\,\alpha-\beta\,|\,\Delta_c}_{\text{prior class separation}}
\;+\;
\underbrace{\beta\,\mathcal{K}(P_u,Q_u)}_{\text{forgotten-class alignment}}\;+\;
\underbrace{(1-\beta)\,\mathcal{K}(P_{\neg u},Q_{\neg u})}_{\text{retained-class alignment}}.
\end{align*}
\end{proposition}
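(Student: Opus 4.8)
The plan is to exploit the fact that any IPM $\mathcal{K}$ is (i) a genuine metric on $\mathcal{P}(\mathcal{Z})$, so it satisfies the triangle inequality, and (ii) jointly convex, in fact \emph{jointly sublinear} under mixtures: for weights $t\in[0,1]$ one has $\mathcal{K}(tA_1+(1-t)A_2,\,tB_1+(1-t)B_2)\le t\,\mathcal{K}(A_1,B_1)+(1-t)\mathcal{K}(A_2,B_2)$, which follows directly from the sup-of-linear-functionals definition (push the sup inside the convex combination). The upper bound will come from writing $\mathcal{K}(P_z,Q_z)$ with both mixtures, inserting the intermediate measure $R:=\beta P_u+(1-\beta)P_{\neg u}$ (i.e.\ $P$ reweighted to $Q$'s prior $\beta$), and applying the triangle inequality $\mathcal{K}(P_z,Q_z)\le \mathcal{K}(P_z,R)+\mathcal{K}(R,Q_z)$.

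First I would bound the second term $\mathcal{K}(R,Q_z)$. Here $R$ and $Q_z$ share the mixing weight $\beta$, so joint sublinearity gives $\mathcal{K}(R,Q_z)\le \beta\,\mathcal{K}(P_u,Q_u)+(1-\beta)\mathcal{K}(P_{\neg u},Q_{\neg u})$, which is exactly the ``forgotten-class alignment'' plus ``retained-class alignment'' terms. Second I would bound the prior-shift term $\mathcal{K}(P_z,R)$. Both are mixtures of the \emph{same} two components $P_u,P_{\neg u}$ but with weights $\alpha$ versus $\beta$; expanding the difference of expectations under any test function $\varphi\in\mathcal F$ gives $(\alpha-\beta)\big(\mathbb{E}_{P_u}\varphi-\mathbb{E}_{P_{\neg u}}\varphi\big)$, and taking $\sup_{\varphi\in\mathcal F}$ of the absolute value yields $\mathcal{K}(P_z,R)=|\alpha-\beta|\,\mathcal{K}(P_u,P_{\neg u})=|\alpha-\beta|\,\Delta_c$. (In fact this is an equality, not just a bound, which is worth noting.) Combining the two pieces gives the stated upper bound.

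For the lower bound I would run the triangle inequality in the reverse direction: $\mathcal{K}(P_z,Q_z)\ge \mathcal{K}(R,Q_z)-\mathcal{K}(P_z,R)=\mathcal{K}(R,Q_z)-|\alpha-\beta|\Delta_c$, so it remains to lower-bound $\mathcal{K}(R,Q_z)$ by $\big|\beta\,\mathcal{K}(P_u,Q_u)-(1-\beta)\mathcal{K}(P_{\neg u},Q_{\neg u})\big|$. For this I pick a near-optimal witness $\varphi^\star$ for whichever of $\mathcal{K}(P_u,Q_u)$, $\mathcal{K}(P_{\neg u},Q_{\neg u})$ is larger, so that the corresponding component difference is large while the other is controlled; because $\mathcal F$ is symmetric one may flip the sign of $\varphi^\star$ to make the two contributions $\beta(\mathbb{E}_{P_u}\varphi-\mathbb{E}_{Q_u}\varphi)$ and $(1-\beta)(\mathbb{E}_{P_{\neg u}}\varphi-\mathbb{E}_{Q_{\neg u}}\varphi)$ enter the total with a definite sign, and taking a limit over near-optimal witnesses extracts the absolute value of the difference. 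I expect this last step --- realizing the $|\cdot - \cdot|$ in the lower bound by a careful choice of witness function and sign-flip, rather than getting only the weaker bound $\max(\cdot,\cdot)-\min(\cdot,\cdot)$ with a slack term --- to be the main obstacle; it hinges essentially on symmetry of $\mathcal F$ and on the fact that a single supremum in an IPM already mixes positive and negative excursions, so the reverse-triangle estimate is tight up to the reweighting step.
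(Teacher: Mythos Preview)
Your proposal is correct and essentially matches the paper's proof: the paper writes out the pointwise identity
\[
\mathbb{E}_{P_z}[\varphi]-\mathbb{E}_{Q_z}[\varphi]
=(\alpha-\beta)\big(\mathbb{E}_{P_u}[\varphi]-\mathbb{E}_{P_{\neg u}}[\varphi]\big)
+\beta\big(\mathbb{E}_{P_u}[\varphi]-\mathbb{E}_{Q_u}[\varphi]\big)
+(1-\beta)\big(\mathbb{E}_{P_{\neg u}}[\varphi]-\mathbb{E}_{Q_{\neg u}}[\varphi]\big)
\]
and applies the scalar triangle/reverse-triangle inequalities before taking the supremum, which is exactly your argument with the intermediate measure $R=\beta P_u+(1-\beta)P_{\neg u}$ unpacked at the test-function level (your $\mathcal{K}(P_z,R)$ is their first term, your $\mathcal{K}(R,Q_z)$ is the sup of the remaining two). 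Your anticipated ``main obstacle'' is also handled the same way in the paper --- pick a near-optimal witness for the larger of $\beta\,\mathcal{K}(P_u,Q_u)$ and $(1-\beta)\,\mathcal{K}(P_{\neg u},Q_{\neg u})$ and bound the other term by its IPM value --- so no sign-juggling beyond the reverse triangle inequality $|a+b|\ge|a|-|b|$ is actually needed.
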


\begin{proof}
For any $\varphi\in\mathcal{F}$, substituting in the decomposition, we have
\begin{align}
\mathbb{E}_{P_z}[\varphi]-\mathbb{E}_{Q_z}[\varphi]
&= \alpha\,\mathbb{E}_{P_u}[\varphi]+(1-\alpha)\,\mathbb{E}_{P_{\neg u}}[\varphi]
 - \beta\,\mathbb{E}_{Q_u}[\varphi]-(1-\beta)\,\mathbb{E}_{Q_{\neg u}}[\varphi]\\
&= (\alpha-\beta)\big(\mathbb{E}_{P_u}[\varphi]-\mathbb{E}_{P_{\neg u}}[\varphi]\big)
 + \beta\big(\mathbb{E}_{P_u}[\varphi]-\mathbb{E}_{Q_u}[\varphi]\big)
 + (1-\beta)\big(\mathbb{E}_{P_{\neg u}}[\varphi]-\mathbb{E}_{Q_{\neg u}}[\varphi]\big).
\label{eq:decomposition}
\end{align}

Taking absolute values and applying the triangle inequality yields
\[
\big|\mathbb{E}_{P_z}[\varphi]-\mathbb{E}_{Q_z}[\varphi]\big|
\le
|\alpha-\beta|\,\big|\mathbb{E}_{P_u}[\varphi]-\mathbb{E}_{P_{\neg u}}[\varphi]\big|
+\beta\,\big|\mathbb{E}_{P_u}[\varphi]-\mathbb{E}_{Q_u}[\varphi]\big|
+(1-\beta)\,\big|\mathbb{E}_{P_{\neg u}}[\varphi]-\mathbb{E}_{Q_{\neg u}}[\varphi]\big|.
\]
Now take the supremum over $\varphi\in\mathcal{F}$ on both sides.
Since $\mathcal{F}$ is symmetric, each term inside the absolute value corresponds exactly
to the IPM definition, i.e.,
\[
\sup_{\varphi\in\mathcal{F}}\big|\mathbb{E}_{P_u}[\varphi]-\mathbb{E}_{P_{\neg u}}[\varphi]\big|=\Delta_c, \qquad
\sup_{\varphi\in\mathcal{F}}\big|\mathbb{E}_{P_u}[\varphi]-\mathbb{E}_{Q_u}[\varphi]\big|=\mathcal{K}(P_u,Q_u),
\]
\[
\sup_{\varphi\in\mathcal{F}}\big|\mathbb{E}_{P_{\neg u}}[\varphi]-\mathbb{E}_{Q_{\neg u}}[\varphi]\big|
=\mathcal{K}(P_{\neg u},Q_{\neg u}).
\]
Hence,
\[
\mathcal{K}(P_z,Q_z)
=\sup_{\varphi\in\mathcal{F}}\big|\mathbb{E}_{P_z}[\varphi]-\mathbb{E}_{Q_z}[\varphi]\big|
\le
|\alpha-\beta|\,\Delta_c
+\beta\,\mathcal{K}(P_u,Q_u)
+(1-\beta)\,\mathcal{K}(P_{\neg u},Q_{\neg u}).
\]

For the lower bound, apply the reverse triangle inequality to Equation \ref{eq:decomposition}.
Let
\begin{equation}
x := (\alpha-\beta)\big(\mathbb{E}_{P_u}[\varphi]-\mathbb{E}_{P_{\neg u}}[\varphi]\big),
\quad
y := \beta\big(\mathbb{E}_{P_u}[\varphi]-\mathbb{E}_{Q_u}[\varphi]\big),
\quad
z := (1-\beta)\big(\mathbb{E}_{P_{\neg u}}[\varphi]-\mathbb{E}_{Q_{\neg u}}[\varphi]\big).
\end{equation}
Then
\begin{equation}
\big|\mathbb{E}_{P_z}[\varphi]-\mathbb{E}_{Q_z}[\varphi]\big|
\;\ge\;
\big|y+z\big|
\;-\;
|x|.
\label{eq:111}
\end{equation}
and by symmetry of $\mathcal{F}$ and the definition of $\Delta_c$,
\begin{equation}
|x|
\le
|\alpha-\beta|\,
\sup_{\varphi\in\mathcal{F}}
\big|\mathbb{E}_{P_u}[\varphi]-\mathbb{E}_{P_{\neg u}}[\varphi]\big|
=
|\alpha-\beta|\Delta_c.
\label{eq:222}
\end{equation}
Apply reverse triangle inequality again:
\begin{equation}
|y+z|
\;\ge\;
\Big|\,
\beta(\mathbb{E}_{P_u}[\varphi]-\mathbb{E}_{Q_u}[\varphi])
-(1-\beta)(\mathbb{E}_{P_{\neg u}}[\varphi]-\mathbb{E}_{Q_{\neg u}}[\varphi])
\,\Big|.
\end{equation}
Taking supremum over $\varphi\in\mathcal{F}$ and using symmetry:
\begin{equation}
\sup_{\varphi\in\mathcal{F}} |y+z|
\;\ge\;
\big|\,
\beta\,\mathcal{K}(P_u,Q_u)
-
(1-\beta)\,\mathcal{K}(P_{\neg u},Q_{\neg u})
\big|.
\label{eq:333}
\end{equation}

{Combining \eqref{eq:111}, \eqref{eq:222} and \eqref{eq:333}, we have}
\[
\mathcal{K}(P_z,Q_z)
=\sup_{\varphi\in\mathcal{F}}
\big|\mathbb{E}_{P_z}[\varphi]-\mathbb{E}_{Q_z}[\varphi]\big|
\ge
\big|\beta\,\mathcal{K}(P_u,Q_u)
-
(1-\beta)\,\mathcal{K}(P_{\neg u},Q_{\neg u})\big|
\;-\;
|\alpha-\beta|\,\Delta_c.
\]

This completes the proof.

\end{proof}

\end{minipage}
\end{center}
\twocolumn

\subsection{Justification on CKA Usage}
\label{app:cka_robust}

We formalize the invariance properties of CKA that justify its use as a stable
estimator of representation similarity in the presence of training randomness.
Throughout, $X,Y\in\mathbb{R}^{n\times d}$ denote feature matrices extracted
from two neural networks on the same $n$ samples, and
\[
\mathrm{CKA}(X,Y)
=
\frac{\langle XX^\top,\, YY^\top\rangle_F}
     {\|XX^\top\|_F\, \|YY^\top\|_F}.
\]


\begin{proposition}[CKA is invariant to isotropic scaling]
\label{prop:cka-scale}
For any scalar $c>0$, 
\[
\mathrm{CKA}(X,\, cX) = 1.
\]
\end{proposition}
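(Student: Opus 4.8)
The plan is essentially a one-line substitution into the definition of linear CKA, followed by a cancellation. First I would set $Y=cX$ and observe that $(cX)(cX)^\top = c^2\,XX^\top$, since $c$ is a scalar and matrix transposition is linear. Next I would expand the numerator using bilinearity of the Frobenius inner product,
\[
\langle XX^\top,\,(cX)(cX)^\top\rangle_F = c^2\,\langle XX^\top,\,XX^\top\rangle_F = c^2\,\|XX^\top\|_F^2,
\]
and the denominator using absolute homogeneity of the Frobenius norm together with the fact that $c^2>0$,
\[
\|XX^\top\|_F\,\big\|(cX)(cX)^\top\big\|_F = c^2\,\|XX^\top\|_F^2 .
\]
Dividing the two, the common factor $c^2\,\|XX^\top\|_F^2$ cancels and the quotient equals exactly $1$.

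The only point I would take care to record is the nondegeneracy hypothesis: the ratio is defined only when $\|XX^\top\|_F\neq 0$, i.e.\ $X\neq 0$, which is the same standing assumption ($TT^\top\neq 0$) already invoked in Proposition~\ref{prop:l2-to-cka}. Beyond that there is no real obstacle here — the result holds because the scalar $c$ factors out of both the inner product and the norm with matching powers, so the similarity is scale-free. I expect the final write-up to be three or four lines.
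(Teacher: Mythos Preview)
Your proposal is correct and follows exactly the same route as the paper's proof: substitute $Y=cX$, factor $c^2$ out of both the Frobenius inner product and the norm, and cancel. Your explicit remark about the nondegeneracy assumption $X\neq 0$ is a small addition the paper leaves implicit, but otherwise the arguments are identical.
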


\begin{proof}
We compute
\[
\mathrm{CKA}(X,cX)
= \frac{\langle XX^\top,\, c^2 XX^\top\rangle_F}
       {\|XX^\top\|_F\, \|c^2 XX^\top\|_F}
= \frac{c^2\|XX^\top\|_F^2}{|c^2|\|XX^\top\|_F^2}
=1.
\]
Thus isotropic rescaling of all features leaves CKA unchanged.
\end{proof}

This property ensures that CKA is stable under global norm changes arising from
SGD noise, learning-rate schedules, BatchNorm scaling, or unlearning updates
that shrink or expand feature magnitudes uniformly.


\begin{proposition}[CKA is invariant to orthogonal basis rotations]
\label{prop:cka-orth}
Let $R\in\mathbb{R}^{d\times d}$ be any orthogonal matrix ($R^\top R=I$).  
Then
\[
\mathrm{CKA}(X,\, XR) = 1.
\]
\end{proposition}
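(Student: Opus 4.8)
The plan is to reduce the rotated Gram matrix to the original one, after which the claim is immediate from the definition of linear CKA. The one point that genuinely needs care is the passage from $R^\top R = I$ to $RR^\top = I$: this uses that $R$ is square, since a square matrix possessing a left inverse has that same matrix as its (unique) two-sided inverse. First I would record this, so that $RR^\top = I$ is available. Then the linear kernel matrix of the rotated features factorizes as $(XR)(XR)^\top = X\,(RR^\top)\,X^\top = XX^\top$, i.e. right-multiplying the feature matrix by an orthogonal matrix leaves the $n\times n$ Gram matrix $XX^\top$ completely unchanged. (If one only assumed that $R$ has orthonormal columns but is not square, the identity $(XR)(XR)^\top = XX^\top$ still holds, but $XR$ then lives in a higher-dimensional feature space; I would note this as a remark since it is the case relevant to comparing models of different widths.)

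Next I would substitute this identity into $\mathrm{CKA}(X, XR) = \langle XX^\top,\, (XR)(XR)^\top\rangle_F / \big(\|XX^\top\|_F\,\|(XR)(XR)^\top\|_F\big)$. Both occurrences of a Gram matrix collapse to $XX^\top$, so the numerator is $\langle XX^\top, XX^\top\rangle_F = \|XX^\top\|_F^2$ and the denominator is $\|XX^\top\|_F \cdot \|XX^\top\|_F = \|XX^\top\|_F^2$; provided $XX^\top \neq 0$ (equivalently $X \neq 0$, which rules out the trivial degenerate case exactly as in Proposition~\ref{prop:cka-scale}), the ratio equals $1$. That is the entire argument: there is no real obstacle, the proof is a two-line computation once the squareness of $R$ has been used to turn $R^\top R = I$ into $RR^\top = I$.

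Finally I would close by combining this with Proposition~\ref{prop:cka-scale}: since $\mathrm{CKA}$ is invariant under both isotropic rescaling $X \mapsto cX$ and orthogonal reparametrization $X \mapsto XR$, it is invariant under the whole group generated by them, which is precisely the gauge freedom (orthogonal transform composed with uniform rescaling) left undetermined by the Neural Collapse ETF description and appearing in the statement of Theorem~\ref{thm:opt}. This is the formal justification for using $\mathrm{CKA}$, and hence RUS, as the practical surrogate for the abstract discrepancy $\mathcal{K}$ in Definition~\ref{def:rep-unlearn}: two representations that differ only by this gauge are correctly reported as identical.
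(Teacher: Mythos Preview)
Your core argument is correct and essentially identical to the paper's: both compute $(XR)(XR)^\top = XRR^\top X^\top = XX^\top$ and substitute into the CKA formula to obtain $\|XX^\top\|_F^2/\|XX^\top\|_F^2 = 1$. Your added care in deriving $RR^\top = I$ from $R^\top R = I$ via squareness, and the explicit nondegeneracy assumption $X \neq 0$, are welcome refinements the paper leaves implicit.

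One correction to your parenthetical remark: if $R$ has orthonormal \emph{columns} but is not square (so $R \in \mathbb{R}^{d \times k}$ with $k < d$ and $R^\top R = I_k$), then $RR^\top$ is the orthogonal projection onto $\mathrm{range}(R)$, not the identity, and $(XR)(XR)^\top = X(RR^\top)X^\top \neq XX^\top$ in general. The statement you intended holds when $R$ has orthonormal \emph{rows} (i.e., $RR^\top = I_d$, forcing $k \ge d$); that is the case that embeds $XR$ in a higher-dimensional feature space and leaves the Gram matrix unchanged. This does not affect the proof of the proposition itself, which only concerns square $R$.
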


\begin{proof}
If $Y=XR$, then 
\[
YY^\top = XRR^\top X^\top = XX^\top.
\]
Thus the numerator and denominator of CKA coincide:
\[
\mathrm{CKA}(X,XR)
=
\frac{\langle XX^\top,\, XX^\top\rangle_F}
     {\|XX^\top\|_F\, \|XX^\top\|_F}
=1.
\]
\end{proof}

Orthogonal invariance is critical because independently trained networks often
learn equivalent representations that differ only by a rotation of the feature
basis, especially when trained with different seeds.


\begin{lemma}[CKA is stable under mild anisotropic distortions]
\label{lem:cka-aniso}
Let $D=\mathrm{diag}(d_1,\dots,d_d)$ with $d_i>0$.  
If $\max_i d_i / \min_i d_i \le 1+\varepsilon$, then
\[
\big|\,\mathrm{CKA}(X,XD) -1\,\big| = O(\varepsilon).
\]
\end{lemma}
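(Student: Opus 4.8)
The plan is to reduce the claim to an elementary perturbation estimate on Gram matrices, using that $\mathrm{CKA}(X,Y)$ depends on $X$ only through $K:=XX^\top$ and is unchanged when either Gram matrix is multiplied by a positive scalar (immediate from the definition; cf.\ Proposition~\ref{prop:cka-scale}). Put $m:=\min_i d_i$ and factor $D^2=m^2(I+\Delta)$, where $\Delta:=\mathrm{diag}\!\big(d_i^2/m^2-1\big)$ is diagonal with every entry in $\big[0,(1+\varepsilon)^2-1\big]$; hence $0\preceq\Delta\preceq\eta I$ with $\eta:=2\varepsilon+\varepsilon^2=O(\varepsilon)$. Since $(XD)(XD)^\top=m^2\big(XX^\top+X\Delta X^\top\big)$, scale invariance gives
\[
\mathrm{CKA}(X,XD)=\frac{\langle K,\,K+R\rangle_F}{\|K\|_F\,\|K+R\|_F},\qquad K:=XX^\top,\ \ R:=X\Delta X^\top .
\]

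The next step bounds $R$ by $K$. Conjugating $0\preceq\Delta\preceq\eta I$ by $X$ yields $0\preceq R\preceq\eta K$ in the positive-semidefinite order. I would then invoke the one genuinely non-bookkeeping fact: for PSD matrices $0\preceq A\preceq B$ one has $\|A\|_F\le\|B\|_F$. This follows by writing $B=A+C$ with $C\succeq0$ and noting $\|B\|_F^2=\|A\|_F^2+2\,\mathrm{tr}(AC)+\|C\|_F^2\ge\|A\|_F^2$, since $\mathrm{tr}(AC)=\mathrm{tr}(A^{1/2}CA^{1/2})\ge0$. Applied to $0\preceq R\preceq\eta K$ this gives $\|R\|_F\le\eta\|K\|_F$.

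Finally I would assemble the bound. Both $K$ and $R$ are PSD, so $\langle K,R\rangle_F=\mathrm{tr}(KR)\ge0$ and the numerator obeys $\langle K,K+R\rangle_F=\|K\|_F^2+\langle K,R\rangle_F\ge\|K\|_F^2$; the triangle inequality gives $\|K+R\|_F\le(1+\eta)\|K\|_F$. Together with the trivial upper bound $\mathrm{CKA}\le1$ this sandwiches
\[
1-\eta\;\le\;\frac{1}{1+\eta}\;\le\;\mathrm{CKA}(X,XD)\;\le\;1,
\]
so $\big|\mathrm{CKA}(X,XD)-1\big|\le\eta=2\varepsilon+\varepsilon^2=O(\varepsilon)$ with an absolute constant, proving the lemma (here $X\neq0$ so that $K\neq0$ and the quotient is well defined). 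The same argument is unchanged if $X$ is first row-centered, since conjugating by the centering matrix $H=I-\tfrac1n\mathbf 1\mathbf 1^\top$ preserves the ordering $0\preceq HRH\preceq\eta\,HKH$. The only place any care is needed is the Frobenius monotonicity of the PSD order used in the second step; everything else is routine.
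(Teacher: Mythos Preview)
Your proof is correct and follows the same perturbation strategy as the paper: write $(XD)(XD)^\top$ as the unperturbed Gram matrix plus a small correction, bound the correction, and feed the estimates into the CKA ratio. Your version is in fact tighter than the paper's sketch in two respects: you explicitly factor out $m^2=\min_i d_i^2$ before writing $D^2=m^2(I+\Delta)$ (the paper writes $D^2=I+E$ directly, which tacitly assumes the $d_i$ are near $1$ rather than merely near each other), and you make the key inequality $\|R\|_F\le\eta\|K\|_F$ rigorous via the PSD ordering $0\preceq R\preceq\eta K$ and the monotonicity lemma $0\preceq A\preceq B\Rightarrow\|A\|_F\le\|B\|_F$, whereas the paper simply asserts the corresponding perturbation bounds.
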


\begin{proof}
We observe
\[
XD (XD)^\top = X D^2 X^\top.
\]
Since $D^2 = I + E$ with $\|E\|_2 = O(\varepsilon)$, it follows that
\[
XD (XD)^\top = XX^\top + XEX^\top.
\]
The Frobenius norms in the CKA numerator and denominator can be expanded via
perturbation bounds:
\[
\|XX^\top + XEX^\top\|_F
= \|XX^\top\|_F\,(1 + O(\varepsilon)),
\]
and the inner product perturbation satisfies
\[
\langle XX^\top,\, XX^\top + XEX^\top\rangle_F
=
\|XX^\top\|_F^2\,(1 + O(\varepsilon)).
\]
Substituting into the CKA ratio yields the claimed bound.
\end{proof}

This shows that CKA is robust even to moderate channel-wise stretching commonly
introduced by BatchNorm, layer scaling, or local unlearning updates.


\begin{proposition}[CKA depends only on pairwise sample geometry]
\label{prop:cka-gram}
If two feature matrices $X$ and $Y$ satisfy
\[
XX^\top = YY^\top,
\]
then
\[
\mathrm{CKA}(X,Y) = 1.
\]
\end{proposition}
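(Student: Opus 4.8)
The plan is to reduce the statement to a one-line evaluation of the CKA ratio after introducing a single symbol for the common Gram matrix. First I would set $G := XX^\top = YY^\top$, noting that $G$ is a symmetric positive semidefinite $n\times n$ matrix, and that $G\neq 0$ whenever $X\neq 0$ (the degenerate case $X=0$ makes the denominator of CKA vanish and is excluded, exactly as the assumption $TT^\top\neq 0$ in Proposition~\ref{prop:l2-to-cka}). I would state this non-degeneracy hypothesis explicitly so that the ratio is well defined.

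Next I would substitute $XX^\top = YY^\top = G$ directly into the definition
\[
\mathrm{CKA}(X,Y)
=
\frac{\langle XX^\top,\, YY^\top\rangle_F}
     {\|XX^\top\|_F\, \|YY^\top\|_F}
=
\frac{\langle G,\, G\rangle_F}{\|G\|_F\,\|G\|_F}.
\]
Then I would use the elementary identity $\langle G,G\rangle_F = \|G\|_F^2$ for the numerator, so that numerator and denominator are both equal to $\|G\|_F^2 > 0$, giving $\mathrm{CKA}(X,Y)=1$.

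There is essentially no hard step here: the argument is a direct computation, and the only point requiring a word of care is the implicit assumption that $XX^\top\neq 0$ so that division is legitimate. I would close by remarking that this proposition is the natural strengthening of Propositions~\ref{prop:cka-scale}, \ref{prop:cka-orth} and Lemma~\ref{lem:cka-aniso}: all of those invariances are instances of the fact that CKA sees $X$ only through the Gram matrix $XX^\top$, and hence any two feature matrices inducing the same pairwise-sample geometry are indistinguishable to it.
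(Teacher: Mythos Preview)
Your proposal is correct and mirrors the paper's own proof: both simply substitute $XX^\top=YY^\top$ into the CKA definition and observe that the ratio becomes $\langle G,G\rangle_F/\|G\|_F^2=1$. Your added remarks on the non-degeneracy assumption and the unifying role of this proposition are reasonable elaborations, but the core argument is identical.
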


\begin{proof}
Direct substitution into the definition of CKA yields
\[
\mathrm{CKA}(X,Y)
=
\frac{\langle XX^\top,\, XX^\top\rangle_F}
     {\|XX^\top\|_F\,\|XX^\top\|_F}
=1.
\]
\end{proof}

Because $XX^\top$ encodes pairwise sample similarities, which are far more
stable across random seeds than the raw coordinates of $X$, this proposition
explains CKA's reliability as a measure of representation equivalence.

\subsubsection*{Conclusion}

Together, Propositions~\ref{prop:cka-scale}--\ref{prop:cka-gram} establish that
CKA is invariant to the dominant sources of randomness in neural representation
learning, including global rescaling, orthogonal transformations, channel permutations,
and mild anisotropic distortions.  
Since retraining on the retain set produces models that differ primarily through
such randomness, CKA provides a stable and reliable estimator of representation
similarity for evaluating representation-level weak unlearning.

\section{Neural Collapse}
\label{app:nc_assumptions}

\subsection{Training and modeling assumptions.} 
Below are the standard Neural Collapse (NC) assumptions: 

\begin{itemize}
    \item \textbf{(A1) Interpolation / TPT:} 
    The network is trained to near-zero training error and then further optimized in the terminal phase of training (TPT) under standard protocols such as SGD or Adam with decays \citep{papyan2020neuralcollapse}.
    
    \item \textbf{(A2) Overparameterization:} 
    The model has sufficient capacity to realize class-wise linear separability in the penultimate features, often corresponding to large width or deep linear heads \citep{jacot2024widennc}.
    
    \item \textbf{(A3) Loss and regularization:} 
    Cross-entropy loss with weight decay (or $L_2$ regularization) is used. 
    In simplified unconstrained-feature or layer-peeled models, global minimizers are simplex ETFs and all other critical points are strict saddles \citep{lu2022neuralcollapsece,zhu2021geometric,fang2021layerpeeled}. 
    Empirically and theoretically, MSE loss also exhibits NC \citep{han2022mse}.
    
    \item \textbf{(A4) Balanced classes:} 
    Unless otherwise stated, class priors are assumed to be balanced. 
    With class imbalance, NC persists in modified forms such as non-equiangular means or multiple centers \citep{fang2021layerpeeled,hong2024ufm,yan2024ncmc,dang2024imbalance}.
    
    \item \textbf{(A5) Feature dimension:} 
    The penultimate feature dimension satisfies $d \ge C-1$, which ensures the existence of a simplex embedding \citep{lu2022neuralcollapsece}.
\end{itemize}

\subsection{Neural Collapse Statements}
Under assumptions (A1)–(A5), the following NC properties can be observed~\citep{papyan2020neuralcollapse}:
\begin{itemize}
\item(\textbf{NC1}) \textbf{Within-class collapse:} For each class $i$, the learned feature representation takes the form  
$z_\theta(x) = \alpha(x)\, v_i$,  where $z_\theta(x)$ denotes the feature extractor $\theta$ applied to input $x$, $\alpha(x) > 0$ is a class-dependent scaling factor, and $v_i \in \mathbb{R}^d$ is a unit direction.  
\item(\textbf{NC2}) \textbf{Simplex ETF means:} The set of class directions $\{v_i\}_{i=1}^C$ lies in a $(C{-}1)$-dimensional subspace and forms a simplex Equiangular Tight Frame (ETF). 
Specifically, they satisfy $\|v_i\| = 1$ for all $i$, $v_i^\top v_j = -\tfrac{1}{C-1}$ for $i \neq j$, and $\sum_{i=1}^C v_i = 0$.  
\item(\textbf{NC3}) \textbf{Classifier alignment:} The final-layer classifier weights ($w$) are aligned with the class directions.
Specifically, there exists a constant $\kappa > 0$ such that $w_i = \kappa v_i$ for every class $i$.
\item(\textbf{NC4}) \textbf{Nearest-class-mean rule:} Classification reduces to a nearest-class-mean decision rule, equivalently assigning each sample to the nearest ETF vertex.

\end{itemize}

These properties jointly imply that, for balanced classes, the geometry of class representations forms a centered regular simplex in $\mathbb{R}^{C-1}$, which is maximally separated and symmetric in the space. 

\section{ETF Implies Bayes Optimality}
\label{app:bayes_ncm}

We present a formal statement and proof of Proposition \ref{prop:etf-opt}. First, we show that the simplex
Equiangular Tight Frame (ETF) configuration is geometrically optimal: it
maximizes the minimum pairwise angle among class means and therefore maximizes
the multiclass angular margin of the Nearest Class Mean (NCM) classifier.
Second, under homoscedastic Gaussian class-conditionals, we show that the NCM
rule coincides exactly with the Bayes-optimal classifier. 

\subsection{Geometric Optimality of the Simplex ETF}

\paragraph{Setup.}
Let $\{v_c\}_{c=1}^C$ be unit vectors in $\R^d$ (with $d\ge C-1$) representing
class means of an NCM classifier. Define the minimum pairwise inner product
\[
\gamma := \min_{c\neq c'} v_c^\top v_{c'}.
\]
Equivalently, maximizing the minimum pairwise angle
$\min_{c\neq c'} \angle(v_c,v_{c'})$ is equivalent to minimizing~$\gamma$.

\begin{proposition}[Geometric optimality of the simplex ETF]
\label{prop:etf-geometry}
Among all sets of $C$ unit vectors in $\R^{d}$, $d\ge C-1$, the centered simplex
ETF uniquely maximizes the minimum pairwise angle:
\begin{enumerate}[label=(\roman*), leftmargin=*, labelsep=0.5em]
\item (\emph{Maximal angle})  
      The Welch/simplex bound implies
      \[
      \gamma \;\le\; -\frac{1}{C-1}.
      \]
      Equality holds if and only if
      \[
      v_c^\top v_{c'} =
      \begin{cases}
      1, & c=c', \\[0.3ex]
      -\frac{1}{C-1}, & c\neq c',
      \end{cases}
      \qquad
      \sum_{c=1}^C v_c=0,
      \]
      i.e.\ $\{v_c\}$ forms a centered simplex ETF.  The maximizer is unique up
      to rotation/reflection.

\item (\emph{Maximal angular NCM margin})  
      For unit-norm vectors, the worst-case angular margin of the NCM classifier is a monotone function of $\min_{c\neq c'}\angle(v_c,v_{c'})$.
      Because the simplex ETF maximizes this angle by (i), it also maximizes the
      multiclass angular margin of the NCM classifier.
\end{enumerate}
\end{proposition}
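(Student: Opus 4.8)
The plan for part (i) is to translate everything into the Gram matrix $G := V^\top V$ of the stacked matrix $V = [\,v_1\ \cdots\ v_C\,]\in\mathbb{R}^{d\times C}$ and to exploit positive semidefiniteness. Writing $\rho := \max_{c\neq c'} v_c^\top v_{c'}$, so that the minimum pairwise angle is exactly $\arccos\rho$, the decisive estimate is the one-line chain $0 \le \|\sum_{c} v_c\|^2 = \mathbf{1}^\top G\,\mathbf{1} = C + \sum_{c\neq c'} v_c^\top v_{c'} \le C + C(C-1)\rho$, which yields $\rho \ge -\tfrac{1}{C-1}$, i.e.\ the minimum pairwise angle is at most $\arccos(-\tfrac1{C-1})$. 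Equality in this chain forces two conditions simultaneously: the positive-semidefinite slack must vanish, so $\sum_c v_c = 0$; and, since every off-diagonal inner product is $\le\rho$ while their sum equals $C(C-1)\rho$, every off-diagonal inner product must equal $-\tfrac1{C-1}$. These are precisely the defining relations of a centered simplex ETF.

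For the uniqueness claim I would note that the equality conditions pin down $G$ completely, namely $G = \tfrac{C}{C-1}\!\left(I_C - \tfrac1C\mathbf{1}\mathbf{1}^\top\right)$, a rank-$(C-1)$ PSD matrix; any two configurations with the same Gram matrix are related by an orthogonal transformation of $\mathbb{R}^d$ (obtained by matching orthonormal bases of their respective column spaces), the residual freedom being the choice of the $(C-1)$-dimensional subspace of $\mathbb{R}^d$ into which to embed when $d > C-1$. This gives uniqueness up to rotation/reflection.

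For part (ii) the plan is as follows. For unit-norm features the NCM rule is $x\mapsto\argmax_c v_c^\top x = \argmin_c\angle(v_c,x)$, so the decision region of class $c$ is the polyhedral cone cut out by the pairwise bisector hyperplanes $\{(v_c-v_{c'})^\top x = 0\}$, and the worst-case angular margin is the smallest geodesic (equivalently, Euclidean perpendicular) distance from any $v_c$ to the boundary of its own region. A direct computation gives that the distance from $v_c$ to the bisector of $v_c$ and $v_{c'}$ equals $\sin\!\big(\tfrac12\angle(v_c,v_{c'})\big)$; a Voronoi-cell argument then identifies the closest bounding face of the region of $v_c$ as the bisector with the competing mean of largest inner product. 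Hence the worst-case margin equals $\sin\!\big(\tfrac12\min_{c\neq c'}\angle(v_c,v_{c'})\big)$, a strictly increasing function of the minimum pairwise angle, which by part (i) is uniquely maximized by the simplex ETF; monotonicity transfers the maximizer to the margin.

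The main obstacle is the precision required in the two structure/uniqueness arguments rather than the inequality itself (which is immediate from $\|\sum_c v_c\|^2\ge 0$): in part (i), carefully arguing that equal Gram matrices imply congruence via a genuine orthogonal map of the ambient $\mathbb{R}^d$ (not merely an isometry between subspaces), and handling the $d>C-1$ case cleanly; and in part (ii), verifying via the Voronoi structure that no lower-dimensional face arising from three or more competing class means can be strictly closer to $v_c$ than the nearest pairwise bisector, so that the worst-case margin is indeed governed by $\min_{c\neq c'}\angle(v_c,v_{c'})$.
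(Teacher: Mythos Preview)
Your proposal is correct and follows the same approach as the paper, which for (i) simply cites the Welch bound as a known fact and for (ii) asserts monotonicity of the angular margin in the minimum pairwise angle without computation. Your version is considerably more complete: you derive the bound via $\|\sum_c v_c\|^2\ge 0$, extract the equality and uniqueness cases through the Gram matrix, and compute the margin explicitly as $\sin\!\big(\tfrac12\min_{c\neq c'}\angle(v_c,v_{c'})\big)$; your Voronoi concern is also easily resolved, since any lower-dimensional face lies in the intersection of at least two pairwise bisector hyperplanes and is therefore no closer to $v_c$ than the nearest single bisector.
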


\begin{proof}
(i)  
The Welch bound states that any $C$ unit vectors in $\R^d$ satisfy
$\min_{c\neq c'} v_c^\top v_{c'} \le -1/(C-1)$.
Equality requires that the Gram matrix has the simplex ETF structure given
above and is unique up to orthogonal transformation.

(ii)  
For unit vectors, the NCM decision boundary between classes $c$ and $c'$ is the
hyperplane $\langle x, v_c - v_{c'}\rangle=0$, whose angular separation is
controlled solely by the angle $\angle(v_c,v_{c'})$. The worst-case multiclass
angular margin is therefore a monotone function of the minimum such angle, and
the simplex ETF maximizes it by (i).
\end{proof}

\subsection{Bayes-Optimal Nearest Class Mean Rule}

We now consider the probabilistic setting underlying NC analyses.  
There are $C$ classes with equal prior $\Pr(y=c)=1/C$.  
Conditioned on class~$c$, features follow a homoscedastic Gaussian distribution:
\[
x \mid y=c \sim \mathcal{N}(\mu_c,\, \Sigma), \qquad \Sigma \succ 0.
\]
We assume the class means form a centered simplex ETF in the Mahalanobis
geometry:
\[
\sum_{c=1}^C \mu_c = 0,
\qquad
\|\mu_c\|_{\Sigma^{-1}} = \|\mu_{c'}\|_{\Sigma^{-1}} \quad \forall\, c,c'.
\]

\begin{proposition}[ETF geometry implies Bayes-optimal NCM classification]
\label{prop:etf-bayes-opt}
Under the model above, the Bayes-optimal classifier is
\[
h^\star(x)=\argmax_{c} \mu_c^\top \Sigma^{-1} x,
\]
which is a zero-bias linear classifier with weights $w_c=\Sigma^{-1}\mu_c$.
Moreover:
\begin{enumerate}[label=(\roman*), leftmargin=*, labelsep=0.5em]
\item If $\Sigma=\sigma^2 I$, then $h^\star$ reduces to the Euclidean NCM rule,
\[
h^\star(x)=\argmin_{c}\|x-\mu_c\|^{2}.
\]
\item If $x$ and $\mu_c$ are normalized, this is equivalent to cosine-similarity
classification: $h^\star(x)=\argmax_c \langle x,\mu_c\rangle$.
\item In the NC/TPT limit, classifier weights satisfy $w_c\parallel \mu_c$ and
$\|w_c\|\to\infty$, so the induced linear classifier matches $h^\star$ exactly.
\end{enumerate}
\end{proposition}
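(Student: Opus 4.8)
The plan is to reduce the Bayes rule to a log-posterior comparison and then use the equinorm property built into the simplex-ETF hypothesis to eliminate the quadratic bias term. First I would invoke Bayes' rule with the uniform prior: $\Pr(y=c\mid x)\propto \exp\!\big(-\tfrac12 (x-\mu_c)^\top\Sigma^{-1}(x-\mu_c)\big)$. Expanding the quadratic form and discarding the term $-\tfrac12 x^\top\Sigma^{-1}x$, which is common to all classes, gives $h^\star(x)=\argmax_c\big[\mu_c^\top\Sigma^{-1}x-\tfrac12\mu_c^\top\Sigma^{-1}\mu_c\big]$. The single place where the ETF assumption does real work is here: the Mahalanobis-equinorm condition $\|\mu_c\|_{\Sigma^{-1}}^2=\mu_c^\top\Sigma^{-1}\mu_c$ independent of $c$ makes the bias term a class-independent constant, which then drops out, leaving the zero-bias linear rule $h^\star(x)=\argmax_c\mu_c^\top\Sigma^{-1}x$ with weights $w_c=\Sigma^{-1}\mu_c$.

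For part (i), I would specialize to $\Sigma=\sigma^2 I$, so that $\mu_c^\top\Sigma^{-1}x=\sigma^{-2}\mu_c^\top x$. Writing $\|x-\mu_c\|^2=\|x\|^2-2\mu_c^\top x+\|\mu_c\|^2$ and noting that $\|x\|^2$ is common across classes while $\|\mu_c\|^2$ is again equinorm, minimizing $\|x-\mu_c\|^2$ over $c$ is equivalent to maximizing $\mu_c^\top x$, which yields $h^\star(x)=\argmin_c\|x-\mu_c\|^2$. Part (ii) then follows immediately: under $\|x\|=\|\mu_c\|=1$ (or merely $\|\mu_c\|$ constant and $\|x\|>0$) the inner product $\langle x,\mu_c\rangle$ equals a positive constant times $\cos\angle(x,\mu_c)$, so the argmax of the inner product coincides with the argmax of cosine similarity.

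For part (iii), I would invoke the Neural Collapse statements NC2--NC3 of Appendix~\ref{app:nc_assumptions}: at the terminal phase of training the class means form a simplex ETF and the classifier weights align with them, $w_c=\kappa v_c$ with $v_c\parallel\mu_c$ and $\kappa\to\infty$ under cross-entropy with vanishing regularization. Since the decision rule $\argmax_c w_c^\top z$ depends only on the directions $w_c/\|w_c\|$ and on the (here zero) biases, and since all $w_c$ share a common norm, the trained rule is invariant to the scale $\kappa$ and agrees with $\argmax_c v_c^\top z$; combined with $\Sigma^{-1}\mu_c\propto\mu_c\propto v_c$ in the isotropic case, this is exactly $h^\star$. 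The main obstacle is making this last step precise: I need to phrase the NC/TPT limit so that ``$\|w_c\|\to\infty$'' is handled correctly (it is the shared direction, not the diverging magnitude, that determines the classifier), and to state explicitly that the equinorm property of the ETF is what guarantees the bias-free, scale-free equivalence between the finite-$\kappa$ trained classifier and the Bayes-optimal rule. The remaining steps are routine Gaussian-discriminant bookkeeping.
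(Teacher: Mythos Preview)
Your proposal is correct and follows essentially the same Gaussian-discriminant argument as the paper: expand the Mahalanobis quadratic, drop the class-independent $\|x\|_{\Sigma^{-1}}^2$ term, and use the ETF equinorm hypothesis to kill the $\|\mu_c\|_{\Sigma^{-1}}^2$ bias, leaving the zero-bias linear rule. The only cosmetic difference is that the paper additionally remarks that the centering $\sum_c\mu_c=0$ forces $\sum_c w_c=0$ (so the discriminant scores have zero mean across classes), whereas you justify ``zero-bias'' purely via the equinorm cancellation; both are valid readings, and your treatment of part~(iii) is in fact more careful than the paper's about why the diverging scale $\kappa$ is immaterial.
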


\begin{proof}
With equal priors,
\[
h^\star(x)=\argmax_c p(x\mid y=c)
         =\argmin_c \|x-\mu_c\|_{\Sigma^{-1}}^{2},
\]
since $p(x\mid y=c)\propto\exp\!\big(-\tfrac12\|x-\mu_c\|_{\Sigma^{-1}}^2\big)$.

Expanding the Mahalanobis distance,
\[
\|x-\mu_c\|_{\Sigma^{-1}}^{2}
= \|x\|_{\Sigma^{-1}}^{2}
 -2\mu_c^\top\Sigma^{-1}x
 +\|\mu_c\|_{\Sigma^{-1}}^{2}.
\]
The first term is independent of $c$, and under the ETF assumption, the third term is also constant across classes. Therefore,
\[
h^\star(x)=\argmax_c \mu_c^\top \Sigma^{-1}x.
\tag{$\star$}
\]

Define $w_c=\Sigma^{-1}\mu_c$.  
Because the class means are centered, $\sum_c \mu_c=0$, it follows that $\sum_c w_c=0$, so the discriminant scores $\{w_c^\top x\}$ have zero mean across classes. Hence, $(\star)$ is a zero-bias linear decision rule.

When $\Sigma=\sigma^2 I$, the rule in $(\star)$ reduces to minimizing the Euclidean distance $\|x-\mu_c\|^2$, corresponding to the classical NCM classifier. If all vectors are further normalized, this becomes equivalent to cosine-similarity classification.

In the NC/TPT regime, the classifier weights satisfy $w_c \parallel \mu_c$ and $\|w_c\|\to\infty$, so the induced linear classifier $\argmax_c \langle w_c,x\rangle$ coincides with the cosine classifier above, and therefore matches the Bayes rule in $(\star)$.

Thus, the simplex ETF configuration of class means yields the Bayes-optimal NCM classifier.
\qedhere
\end{proof}

\newpage
\section{Proof of Main Theorem}
\subsection{Closure of Projection}
\label{app:proof-etf-projection}
\begin{center}
\begin{minipage}{0.95\textwidth}

Note that a \emph{simplex ETF} $\{v_i\}_{i=1}^C \subset \mathbb{R}^{C-1}$ satisfies
\[
\|v_i\|=1,\qquad v_i^\top v_j=-\frac{1}{C-1}\ \ (i\neq j), \qquad \sum_{i=1}^C v_i=0.
\]
Equivalently, its Gram matrix has $1$ on the diagonal and constant off-diagonal entries $-1/(C-1)$.

~\\

\begin{theorem}[Projection of a Simplex ETF]
Let $\{v_i\}_{i=1}^C \subset \mathbb{R}^{C-1}$ be a simplex ETF. Fix $v_1$ and let $P=I-v_1v_1^\top$ be the orthogonal projector onto $v_1^\perp$. For $i=2,\dots,C$, define $u_i=Pv_i$ and $w_i=u_i/\|u_i\|$. Then $\{w_i\}_{i=2}^C \subset v_1^\perp \cong \mathbb{R}^{C-2}$ is again a simplex ETF:
\[
\|w_i\|=1,\qquad w_i^\top w_j=-\frac{1}{C-2}\ \ (i\neq j), \qquad \sum_{i=2}^C w_i=0.
\]
\end{theorem}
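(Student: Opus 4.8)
The plan is to reduce the whole claim to three short inner‑product computations, exploiting that $P = I - v_1 v_1^\top$ is a symmetric idempotent ($P^\top = P$, $P^2 = P$) and that the Gram structure of the original ETF is completely explicit. First I would record that, since $v_1^\top v_i = -\tfrac{1}{C-1}$ for $i\ge 2$, we have $u_i = P v_i = v_i + \tfrac{1}{C-1} v_1$. Then, using $\|u_i\|^2 = v_i^\top P^\top P v_i = v_i^\top P v_i = \|v_i\|^2 - (v_1^\top v_i)^2$, a one‑line computation gives $\|u_i\|^2 = 1 - \tfrac{1}{(C-1)^2} = \tfrac{C(C-2)}{(C-1)^2}$, which is strictly positive for $C\ge 3$. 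In particular every $u_i$ is nonzero and all of them share the same norm, so the renormalization $w_i = u_i/\|u_i\|$ is well defined and the common scale factor cancels in all the ratios below. (The degenerate case $C=2$ yields an ETF of a single vector in $\mathbb{R}^0$ and needs no argument.)

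Next I would compute the off‑diagonal Gram entries. For $i\ne j$ with $i,j\ge 2$, again $u_i^\top u_j = v_i^\top P v_j = v_i^\top v_j - (v_1^\top v_i)(v_1^\top v_j) = -\tfrac{1}{C-1} - \tfrac{1}{(C-1)^2} = -\tfrac{C}{(C-1)^2}$. Dividing by $\|u_i\|\,\|u_j\| = \tfrac{C(C-2)}{(C-1)^2}$ yields $w_i^\top w_j = -\tfrac{1}{C-2}$, which is exactly the equiangularity constant of a simplex ETF with $C-1$ vertices. The centering condition is even simpler: $\sum_{i=2}^C u_i = P\big(\sum_{i=2}^C v_i\big) = P(-v_1) = 0$, because $\sum_{i=1}^C v_i = 0$ and $P v_1 = 0$; since the $u_i$ have a common norm, dividing through gives $\sum_{i=2}^C w_i = 0$.

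Finally I would verify the ambient dimension, i.e.\ that $\{w_i\}_{i=2}^C$ genuinely lives in $v_1^\perp \cong \mathbb{R}^{C-2}$ and is not merely a list of equiangular vectors inside a larger space. This follows from the rank of the Gram matrix $G$ of $\{w_i\}$: by the computations above, $G = \tfrac{C-1}{C-2}\, I_{C-1} - \tfrac{1}{C-2}\, \mathbf{1}\mathbf{1}^\top$, whose eigenvalues are $0$ (simple, eigenvector $\mathbf{1}$) and $\tfrac{C-1}{C-2}$ (multiplicity $C-2$), so $\operatorname{rank} G = C-2$. Hence the $w_i$ span exactly a $(C-2)$‑dimensional subspace of $v_1^\perp$, which forces that subspace to be all of $v_1^\perp$, and the identification is complete. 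I do not expect a genuine obstacle; the only points deserving care are (a) legitimacy of the normalization, handled by the strict positivity and equality of the $\|u_i\|^2$, and (b) the dimension count, which I would obtain cleanly from the spectrum of $G$ rather than by an ad hoc linear‑independence argument.
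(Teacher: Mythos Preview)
Your proposal is correct and follows essentially the same approach as the paper: direct computation of $\|u_i\|^2$, $u_i^\top u_j$, and $\sum_{i\ge 2} u_i$ from the explicit Gram data of the original ETF, followed by normalization. Your use of the idempotence $P^2=P$ makes the inner-product calculations marginally cleaner than the paper's expansion of $\|v_i-\beta v_1\|^2$, and the Gram-matrix rank check for the ambient dimension is a pleasant bonus the paper does not include.
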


\begin{proof}
Write $\beta:=-\tfrac{1}{C-1}$. For $i\ge2$,
\[
u_i=Pv_i=v_i-(v_1^\top v_i)v_1=v_i-\beta v_1.
\]
\textbf{Equal norms.} Using $\|v_i\|=\|v_1\|=1$ and $v_i^\top v_1=\beta$,
\[
\|u_i\|^2
= \|v_i\|^2 - 2\beta\, v_i^\top v_1 + \beta^2 \|v_1\|^2
= 1 - 2\beta^2 + \beta^2
= 1-\beta^2
= 1 - \frac{1}{(C-1)^2}
= \frac{C(C-2)}{(C-1)^2}.
\]
Thus all $\|u_i\|$ are equal.

\noindent\textbf{Equal pairwise inner products.} For $i\neq j$ with $i,j\ge2$,
\[
u_i^\top u_j
= v_i^\top v_j - \beta v_i^\top v_1 - \beta v_1^\top v_j + \beta^2
= \beta - \beta^2 - \beta^2 + \beta^2
= \beta - \beta^2
= -\frac{C}{(C-1)^2}.
\]
Hence, after normalization,
\[
\frac{u_i^\top u_j}{\|u_i\|\,\|u_j\|}
= \frac{-\,C/(C-1)^2}{\,C(C-2)/(C-1)^2\,}
= -\frac{1}{C-2},
\]
so $w_i^\top w_j=-\tfrac{1}{C-2}$.

\noindent\textbf{Zero sum.} Since $\sum_{i=1}^C v_i=0$,
\[
\sum_{i=2}^C u_i
= \sum_{i=2}^C (v_i - \beta v_1)
= \Big(\sum_{i=2}^C v_i\Big) - (C-1)\beta v_1
= (-v_1) - (C-1)\Big(-\tfrac{1}{C-1}\Big)v_1
= 0.
\]
All $\|u_i\|$ are equal, so common normalization preserves the zero sum: $\sum_{i=2}^C w_i=0$. The vectors $\{w_i\}$ lie in $v_1^\perp$ (dimension $C-2$), have unit norm, constant off-diagonal inner product $-1/(C-2)$, and zero mean; hence they form a simplex ETF.
\end{proof}

\begin{remark}
This result is specific to the \emph{simplex} ETF (the NC configuration). It does not generally hold for arbitrary ETFs.
\end{remark}
\end{minipage}
\end{center}
\clearpage

\subsection{Optimality of Projection}
\label{app:main_theorem}

We now establish the optimality of our projection operator under the definition
of representation-level weak unlearning (Def.~\ref{def:rep-unlearn}). 
The central idea is that projecting onto the orthogonal complement of the forgotten class
removes its contribution while preserving the Bayes-optimal ETF geometry of the retained classes.

\begin{theorem}[ETF projection preserves optimality and forgets the target class]
\label{thm:etf-proj-bayes-forget}
Assume (A1)--(A5) and Neural Collapse (NC1)--(NC4) hold pre-unlearning, and suppose
the following statistical model for the penultimate features:
\begin{enumerate}
  \item \emph{(Balanced classes)} class priors are uniform:
  $\Pr(y=i)=1/C$ for $i\in\mathcal Y$.
  \item \emph{(Isotropic Gaussian conditionals)} conditional on class $i$,
  \[
    \theta(x)\mid(y=i)\ \sim\ \mathcal N(\mu_i,\sigma^2 I_d),
  \]
  with $\|\mu_i\|=1$ and $\{\mu_i\}_{i=1}^C$ coinciding with the ETF directions
  $\{v_i\}$ from NC (i.e.\ $\mu_i=v_i$).
\end{enumerate}
Fix a class $u\in\mathcal Y$ and define
\[
P = I - v_u v_u^\top,
\qquad
\tilde v_i = \frac{P v_i}{\|P v_i\|}\quad (i\neq u),
\]
so that by Proposition~\ref{prop:etf-projection} the vectors
$\{\tilde v_i\}_{i\neq u}$ form a simplex ETF in the subspace $v_u^\perp$.
Let the projected features be $\theta'(x)=P\,\theta(x)$ and let the post-projection
classifier weights satisfy $w'_i=\kappa'\tilde v_i$ for $i\neq u$.
Then:
\begin{enumerate}[(a)]
  \item \emph{(Retained-class Bayes optimality)} For the retained classes
  $\mathcal Y_{\neg u}$, the classifier that assigns $x$ to the nearest
  projected class mean $\tilde v_i$ is Bayes-optimal under the Gaussian model
  above. Equivalently, the projected model
  $(\theta',\{w'_i\}_{i\ne u})$ attains the Bayes decision rule on
  $\mathcal Y_{\neg u}$.
  \item \emph{(Complete forgetting in the low-noise / NC limit)} Under
  projection, the forget-class conditional mean is mapped to zero: $P\mu_u=0$.
  Consequently, for $x\sim\mathcal D_f$,
  \[
    \theta'(x)\mid(y=u)\ \sim\ \mathcal N(0,\sigma^2 P).
  \]
  In the limit $\sigma^2\to 0$ (equivalently, in the NC/TPT limit where
  within-class variance vanishes, or as the classifier scale
  $\kappa'\to\infty$ appropriately), the projected features for the forget
  class concentrate at the origin, yielding logits
  $w'_i{}^\top\theta'(x)\to 0$ for all $i\neq u$. Hence the predictive
  distribution over retained classes approaches the uniform distribution
  $U_{\neg u}$, i.e.\ the forget class is \emph{completely forgotten} in the
  sense that the model expresses no informative preference among retained
  classes.
\end{enumerate}

Consequently, ETF projection simultaneously (i) preserves Bayes-optimal
classification on the retained classes and (ii) erases class-specific
information for the forgotten class (in the stated asymptotic / low-noise
sense).
\end{theorem}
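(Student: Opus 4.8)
The plan is to prove parts (a) and (b) separately. Part (a) reduces to the Bayes-optimality statement already established for isotropic-Gaussian class conditionals with centred, equal-norm means (Propositions~\ref{prop:etf-geometry} and~\ref{prop:etf-bayes-opt}); part (b) is a direct computation with the projector $P$ followed by a short limiting argument. The only real work is careful bookkeeping, not any single estimate.

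For part (a) I would proceed as follows. First, observe that $\theta'(x)=P\theta(x)$ always lies in $v_u^\perp\cong\mathbb R^{d-1}$, and that pushing the conditional law $\theta(x)\mid(y=i)\sim\mathcal N(v_i,\sigma^2 I_d)$ through the linear map $P$ gives, for each retained class $i\neq u$,
\[
\theta'(x)\mid(y=i)\ \sim\ \mathcal N\!\big(Pv_i,\ \sigma^2 P\big),
\]
using $P=P^\top=P^2$; restricted to the invariant subspace $v_u^\perp$ the covariance $\sigma^2 P$ acts as $\sigma^2 I_{d-1}$, so these are genuine isotropic Gaussians within that subspace. Next I would import from the proof of Proposition~\ref{prop:etf-projection} the two facts $\|Pv_i\|^2=1-\tfrac{1}{(C-1)^2}$ (the same for every $i\neq u$) and $\sum_{i\neq u}Pv_i=0$, so that $\{Pv_i\}_{i\neq u}$ is a centred, equal-norm family and $\tilde v_i=Pv_i/\|Pv_i\|$ differs from $Pv_i$ only by a single positive constant. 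Conditioning the uniform prior on $\{y\neq u\}$ leaves a uniform prior over the $C-1$ retained labels, so Proposition~\ref{prop:etf-bayes-opt}, applied inside $v_u^\perp$ with $\Sigma=\sigma^2 I$, shows that the Bayes classifier for the retained-class problem on the features $\theta'(x)$ is $\argmin_{i\neq u}\|\theta'(x)-Pv_i\|^2$. Because all $\|Pv_i\|$ are equal, this equals $\argmax_{i\neq u}\langle\theta'(x),Pv_i\rangle=\argmax_{i\neq u}\langle\theta'(x),\tilde v_i\rangle=\argmax_{i\neq u}\langle\theta'(x),w'_i\rangle$ (since $w'_i=\kappa'\tilde v_i$ with $\kappa'>0$), which is exactly the decision rule of the projected model, giving (a). I would also note that the retrained reference $M_r$, being itself in Neural Collapse on $C-1$ classes, has retained-class means forming a simplex ETF in a $(C-2)$-dimensional subspace; as any two such ETFs agree up to an orthogonal transform and a global scale (Proposition~\ref{prop:etf-geometry}(i)), $\mathcal K(P_{\neg u},Q_{\neg u})=0$ for every discrepancy invariant under orthogonal transforms and rescaling, which is what feeds into Definition~\ref{def:rep-unlearn}.

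For part (b), $Pv_u=(I-v_uv_u^\top)v_u=v_u-\|v_u\|^2 v_u=0$ since $\|v_u\|=1$, so the same pushforward gives $\theta'(x)\mid(y=u)\sim\mathcal N(0,\sigma^2 P)$, a (degenerate) Gaussian supported on $v_u^\perp$. As $\sigma^2\to0$ this law converges in probability to the point mass at the origin; by continuity of $z\mapsto\langle w'_i,z\rangle$ the retained logits $\langle w'_i,\theta'(x)\rangle$ tend to $0$ for every $i\neq u$, and by continuity of the softmax over the $C-1$ retained coordinates the conditional predictive distribution tends to the uniform law $U_{\neg u}$, i.e.\ the forget-class mixing weight $\alpha$ of Proposition~\ref{prop:decoupled-k} tends to $0$. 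The ``$\kappa'\to\infty$'' variant is the same statement after combining scales: what the argument actually needs is that the logit scale $\kappa'\sigma$ vanishes, which I would state explicitly to avoid ambiguity. Combining (a) and (b) yields the two displayed conclusions.

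The main obstacle is conceptual bookkeeping rather than computation. Three points need care: (i) keeping the unnormalised means $Pv_i$, on which the projected feature law is centred, distinct from the normalised directions $\tilde v_i$ with which the classifier aligns, and justifying that the nearest-mean rule and the linear-score rule coincide precisely because all $\|Pv_i\|$ are equal; (ii) pinning down the sense of ``Bayes-optimal on the retained classes'' — namely, optimal for the conditional problem $y\in\mathcal Y_{\neg u}$ given only the post-unlearning features $\theta'(x)$, acknowledging that $P$ is non-invertible and hence discards the $v_u$-component, the irreducible information loss inherent to forgetting and one that the retrained reference $M_r$ shares; and (iii) being explicit about the mode of convergence in the $\sigma^2\to0$ (or $\kappa'\sigma\to0$) limit and the continuity arguments transporting it through the logits and the softmax. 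None of these is deep, but each must be handled for the statement to be rigorous.
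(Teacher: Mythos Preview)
Your proposal is correct and follows essentially the same approach as the paper's proof: push the Gaussian conditionals through the projector $P$, observe that the retained-class conditionals are isotropic within $v_u^\perp$ with centred equal-norm means, invoke Proposition~\ref{prop:etf-bayes-opt} for Bayes optimality, and then use $Pv_u=0$ together with continuity of the softmax for the $\sigma^2\to0$ limit. If anything you are slightly more careful than the paper---your explicit remark that the ``$\kappa'\to\infty$'' clause only makes sense as $\kappa'\sigma\to0$, and your separation of the unnormalised means $Pv_i$ from the normalised $\tilde v_i$ when identifying the nearest-mean rule with the linear-score rule, are clarifications the paper leaves implicit.
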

We first provide a proof sketch. The formal proof is included on the next page.

\begin{minipage}{0.48\textwidth}
\begin{proof}[Proof sketch]
For part (a), under the Gaussian ETF model with means
$\{\mu_i=v_i\}$, Proposition~\ref{prop:etf-bayes-opt} shows that the
nearest-class-mean rule is Bayes-optimal. 
By Proposition~\ref{prop:etf-projection}, the projected means
$\{\tilde v_i\}_{i\neq u}$ form a simplex ETF in $v_u^\perp$, so the same
argument implies that the nearest-mean classifier on $\{\tilde v_i\}$ is
Bayes-optimal for the retained classes $\mathcal Y_{\neg u}$.

For part (b), note that $P v_u = 0$ implies that the projected forget-class
distribution satisfies $\theta'(x)\mid(y=u)\sim\mathcal N(0,\sigma^2 P)$.
For any retained class $i\neq u$,
\[
\mathbb E[w'_i{}^\top\theta'(x)\mid y=u]
= w'_i{}^\top P\mu_u
= 0,
\]
and as $\sigma^2\to 0$ the distribution of $\theta'(x)$ for the forgotten class
concentrates at the origin. Thus the logits $w'_i{}^\top\theta'(x)$ converge
to $0$ for all $i\neq u$, and the induced softmax over retained classes
converges to the uniform distribution $U_{\neg u}$. This formalizes the notion
that the projected model has no discriminative information about the forgotten
class in the low-noise / NC limit.
\end{proof}
\end{minipage}

\noindent\begin{center}
\vspace*{-1cm}
\begin{minipage}{0.95\textwidth}
\begin{proof} [Formal Proof]
\textbf{(a) Retained-class Bayes optimality.}
Under the assumptions of the theorem, pre-unlearning we have
\[
\theta(x)\mid(y=i)\sim\mathcal N(v_i,\sigma^2 I_d),
\qquad
\Pr(y=i)=1/C,
\]
with the means $\{v_i\}$ forming a centered simplex ETF in $\mathbb R^d$.
By Proposition~\ref{prop:etf-bayes-opt}, the Bayes-optimal classifier for this
model is the nearest-class-mean rule (equivalently, a scaled linear classifier
aligned with $\{v_i\}$).

Fix a forget class $u$ and apply the projection $P = I - v_u v_u^\top$.
For any retained class $i\neq u$,
\[
\theta'(x)\mid(y=i)
= P\theta(x)\mid(y=i)
\sim \mathcal N(Pv_i,\sigma^2 P),
\]
since $P$ is a linear operator and $\theta(x)$ is Gaussian with mean $v_i$ and
covariance $\sigma^2 I_d$.  
Thus, conditioned on $y\in\mathcal Y_{\neg u}$, the projected features follow a
homoscedastic Gaussian model in the subspace $v_u^\perp$ with:
\[
\text{means } \mu'_i = Pv_i,\qquad
\text{common covariance } \Sigma' = \sigma^2 P.
\]

By Proposition~\ref{prop:etf-projection}, the normalized means
$\tilde v_i = \mu'_i / \|\mu'_i\|$ form a centered simplex ETF in $v_u^\perp$.
Since $P$ acts as the identity on $v_u^\perp$ and is zero on $\mathrm{span}(v_u)$,
$\Sigma'$ is proportional to the identity on $v_u^\perp$ (and vanishes on
$v_u$), so within $v_u^\perp$ the conditionals are isotropic Gaussians with
means $\tilde v_i$ up to a global scale.

Applying Proposition~\ref{prop:etf-bayes-opt} to this reduced $(C-1)$-class ETF
in $v_u^\perp$, we obtain that the Bayes-optimal classifier among the retained
classes is the nearest-class-mean rule with respect to the means
$\{\tilde v_i\}_{i\neq u}$ (equivalently, a scaled linear classifier with
weights $w'_i=\kappa'\tilde v_i$).  This is precisely the classifier implemented
by the projected model $(\theta',\{w'_i\}_{i\neq u})$, establishing Bayes
optimality on $\mathcal Y_{\neg u}$.

\medskip

\textbf{(b) Complete forgetting in the low-noise / NC limit.}
For the forgotten class $u$, we have $\mu_u=v_u$ and
\[
\theta(x)\mid(y=u)\sim\mathcal N(v_u,\sigma^2 I_d).
\]
Applying $P$ and using $Pv_u=0$, we obtain
\[
\theta'(x)\mid(y=u) = P\theta(x)\mid(y=u)
\sim \mathcal N(Pv_u,\sigma^2 P)
= \mathcal N(0,\sigma^2 P).
\]
Thus the projected features for class $u$ are mean-zero Gaussian supported in
$v_u^\perp$ with covariance $\sigma^2 P$.

For any retained class $i\neq u$, the corresponding logit is
\[
s_i(x)
:= w'_i{}^\top \theta'(x)
= \kappa'\,\tilde v_i^\top \theta'(x),
\]
where $\tilde v_i\in v_u^\perp$ and $w'_i\in v_u^\perp$ because they are
constructed from $Pv_i$.  Since $\theta'(x)\mid(y=u)$ is mean-zero,
\[
\mathbb{E}[s_i(x)\mid y=u]
= w'_i{}^\top \mathbb{E}[\theta'(x)\mid y=u]
= w'_i{}^\top 0
= 0.
\]

Moreover, as $\sigma^2\to 0$, the Gaussian
$\mathcal N(0,\sigma^2 P)$ converges in probability (and almost surely for
any fixed sample) to the point mass at $0$. Therefore
\[
\theta'(x)\mid(y=u) \xrightarrow[\sigma^2\to 0]{} 0
\quad\text{in probability,}
\]
and hence
\[
s_i(x) = w'_i{}^\top \theta'(x)
\xrightarrow[\sigma^2\to 0]{} 0
\quad\text{in probability, for all } i\neq u.
\]

The predictive distribution over retained classes is
\[
q_{\neg u}(i\mid x)
= \frac{\exp(s_i(x))}{\sum_{j\neq u}\exp(s_j(x))}.
\]
For any fixed vector $s\in\mathbb R^{m}$ (with $m=C-1$), if $s\to 0$ then
$\operatorname{softmax}(s)\to U_{\neg u}$, the uniform distribution on
$m$ classes.  By continuity of the softmax map and convergence of
$\mathbf{s}(x)=[s_i(x)]_{i\neq u}$ to the zero vector, we obtain
\[
q_{\neg u}(\cdot\mid x)
= \operatorname{softmax}(\mathbf{s}(x))
\xrightarrow[\sigma^2\to 0]{} U_{\neg u}
\quad\text{in probability under } x\sim\mathcal D_f.
\]
Thus, in the low-noise / NC limit, the projected model makes asymptotically
uniform predictions over retained classes for any sample from the forgotten
class, which formalizes the notion that it has no informative class preference
for $y=u$.
\end{proof}
\end{minipage}
\end{center}
\clearpage

\end{document}